\theoremstyle{plain}
\newtheorem{theorem}{Theorem}[section]
\newtheorem{lemma}[theorem]{Lemma}
\theoremstyle{definition}
\newtheorem{definition}[theorem]{Definition}
\title{CAHC:A General Conflict-Aware Heuristic Caching \\ Framework for Multi-Agent Path Finding}
\author{
    HT To$^{1,2}$ \and
    S Nguyen $^{1}$ \and
    NH Pham $^{1}$
    \\
    $^{1}$ University of Engineering and Technology \\
    $^{2}$ University of Transport Technology\\
}
\begin{document}

\maketitle


\begin{abstract}
Multi-Agent Path Finding (MAPF) algorithms, including those for car-like robots 
and grid-based scenarios, face significant computational challenges due to 
expensive heuristic calculations. Traditional heuristic caching assumes that the 
heuristic function depends only on the state, which is incorrect in constraint-based 
search algorithms (e.g., CBS, MAPF-LNS, MAP2) where constraints from conflict 
resolution make the search space context-dependent. We propose \textbf{CAHC} (Conflict-Aware Heuristic Caching), a general framework 
that caches heuristic values based on both state and relevant constraint context, 
addressing this fundamental limitation. We demonstrate CAHC through a case study 
on CL-CBS for car-like robots, where we combine conflict-aware caching with an 
adaptive hybrid heuristic in \textbf{CAR-CHASE} (Car-Like Robot Conflict-Aware 
Heuristic Adaptive Search Enhancement). Our key innovations are (1) a compact 
\emph{conflict fingerprint} that efficiently encodes which constraints affect a 
state's heuristic, (2) a domain-adaptable relevance filter using spatial, temporal, 
and geometric criteria, and (3) a modular architecture that enables systematic 
application to diverse MAPF algorithms. Experimental evaluation on 480 CL-CBS 
benchmark instances demonstrates a geometric mean speedup of 2.46$\times$ while 
maintaining solution optimality. The optimizations improve success rate from 77.9\% 
to 84.8\% (+6.9 percentage points), reduce total runtime by 70.1\%, and enable 
solving 33 additional instances. The framework's general architecture makes it 
applicable as a reliable optimization technique for MAP2, MAPF-LNS, and other 
constraint-based MAPF algorithms.
\end{abstract}


\section{Introduction}

The deployment of autonomous multi-robot systems, from warehouse automation 
to autonomous vehicle coordination, requires efficient algorithms for 
Multi-Agent Path Finding (MAPF). MAPF algorithms span diverse domains: 
grid-based scenarios, car-like robots with kinematic constraints, and 
large-scale problems requiring iterative refinement (e.g., MAPF-LNS, 
MAP2). A common challenge across these domains is the 
computational cost of heuristic evaluation, which becomes the primary 
bottleneck as problem complexity increases.

For car-like robots, Conflict-Based Search with Continuous Time (CL-CBS)~\cite{wen2021clmapf} 
addresses kinematic constraints by combining high-level conflict detection with 
low-level Reeds-Shepp path planning~\cite{reeds1990rs}. However, computing 
non-holonomic heuristics is computationally expensive. Similarly, other MAPF 
algorithms face expensive heuristic computations: MAP2 uses kinematic heuristics, 
MAPF-LNS iteratively refines solutions requiring repeated heuristic evaluation, 
and grid-based CBS variants may use learned or complex distance heuristics.

Previous work has explored various optimizations for CBS including improved 
data structures~\cite{felner2018cbs-improvements}, symmetry breaking, and 
heuristic caching. Traditional heuristic caching, successfully applied in 
classical A* search, stores precomputed distances indexed by state. However, 
this approach makes a critical assumption: the heuristic function $h(s)$ 
depends only on the state $s$. In constraint-based search algorithms (CBS, 
MAPF-LNS, MAP2), this assumption is \emph{violated} because constraints 
generated during conflict resolution fundamentally alter the search space. 
For example, a state $s$ might have a heuristic value of 10 in the absence 
of constraints, but the same state requires a detour with value 15 when a 
constraint blocks the optimal path. Traditional caching incorrectly returns 
the cached value of 10 in both cases, leading to suboptimal search guidance.

We propose \textbf{CAHC} (Conflict-Aware Heuristic Caching), a general framework 
for constraint-based MAPF algorithms that caches heuristic values based on 
$(state, constraint\text{-}context)$ pairs rather than state alone. The framework 
consists of three modular components: (1) a compact \emph{ConflictFingerprint} 
that efficiently encodes which constraints affect a state's heuristic, (2) a 
domain-adaptable \emph{relevance filter} that determines constraint relevance 
using spatial, temporal, and geometric criteria, and (3) a generic caching 
interface that maps $(state, fingerprint)$ pairs to heuristic values.

We demonstrate CAHC through a comprehensive case study on CL-CBS for car-like 
robots, implementing \textbf{CAR-CHASE} (Car-Like Robot Conflict-Aware Heuristic 
Adaptive Search Enhancement). CAR-CHASE combines the CAHC framework with an 
\textbf{adaptive hybrid heuristic} that intelligently switches between fast 
approximate and exact Reeds-Shepp computations with theoretical quality 
guarantees, further reducing the computational cost of cache misses.

Our experimental evaluation on 480 CL-CBS benchmark instances demonstrates that 
CAR-CHASE achieves a geometric mean speedup of 2.46$\times$ compared to baseline 
CL-CBS while maintaining solution optimality. The optimizations improve success 
rate from 77.9\% to 84.8\% (+6.9 percentage points), reduce total runtime by 
70.1\%, and enable solving 33 additional instances that previously timed out. 
Performance gains scale with problem complexity, reaching up to 4.06$\times$ 
speedup for challenging scenarios. The CAHC framework's modular architecture 
makes it applicable as a reliable optimization technique for MAP2, MAPF-LNS, and 
other constraint-based MAPF algorithms.

\paragraph{Contributions.} Our main contributions are:
\begin{itemize}
    \item \textbf{CAHC Framework:} A general, modular, domain-agnostic caching 
    architecture for constraint-based MAPF algorithms that correctly handles 
    context-dependent heuristics. The framework can be systematically applied 
    to CL-CBS, MAP2, MAPF-LNS, and other constraint-based MAPF algorithms 
    (Sections~\ref{sec:conflict-cache},~\ref{sec:general-arch})
    
    \item \textbf{CAR-CHASE Case Study:} A concrete instantiation of CAHC for 
    CL-CBS that combines conflict-aware heuristic caching with adaptive hybrid 
    heuristics, demonstrating the framework's effectiveness through comprehensive 
    experimental evaluation (Sections~\ref{sec:conflict-cache},~\ref{sec:hybrid},~\ref{sec:experiments})
    
    \item \textbf{Conflict Fingerprint Design:} A compact encoding mechanism 
    that efficiently represents which constraints affect a state's heuristic, 
    enabling correct cache lookups in constraint-based search 
    (Section~\ref{sec:conflict-cache})
    
    \item \textbf{Domain-Adaptable Relevance Filter:} An efficient algorithm 
    to determine constraint relevance using spatial, temporal, and geometric 
    criteria, with guidelines for adaptation to different MAPF domains 
    (Section~\ref{sec:relevance})
    
    \item \textbf{General Architecture:} A modular design that separates 
    domain-specific components (fingerprint encoding, relevance filter) from 
    domain-independent caching mechanisms, enabling reliable optimization across 
    diverse MAPF algorithms (Section~\ref{sec:general-arch})
    
    \item \textbf{Theoretical Analysis:} Proofs of admissibility preservation, 
    cache size bounds, and solution quality guarantees that apply to the general 
    framework (Sections~\ref{sec:theory-cache},~\ref{sec:theory-hybrid})
    
    \item \textbf{Experimental Validation:} Comprehensive evaluation on 480 
    CL-CBS instances showing 2.46$\times$ geometric mean speedup, 6.9 percentage 
    point improvement in success rate, and 70.1\% total runtime reduction 
    (Section~\ref{sec:experiments})
\end{itemize}


\section{Background and Related Work}

\subsection{Multi-Agent Path Finding}

The Multi-Agent Path Finding (MAPF) problem requires finding collision-free 
paths for $k$ agents from their start positions to goal positions. Formally, 
given a graph $G = (V, E)$ and $k$ agents with start-goal pairs 
$(s_i, g_i)$ for $i \in \{1, \ldots, k\}$, the objective is to find paths 
$\pi_i : [0, T_i] \rightarrow V$ such that:
(1) $\pi_i(0) = s_i$ and $\pi_i(T_i) = g_i$ for all $i$,
(2) no two agents collide: $\pi_i(t) \neq \pi_j(t)$ for all $i \neq j, t$,
and (3) the sum of costs $\sum_i cost(\pi_i)$ is minimized.

MAPF is NP-hard~\cite{ratner1986mapf-complexity} and has been studied 
extensively. Early approaches include coupled methods that plan for all agents 
jointly, such as Increasing Cost Tree Search (ICTS)~\cite{standley2010icts} 
and Operator Decomposition~\cite{standley2010icts}, as well as decoupled 
methods that plan individually with coordination~\cite{silver2005cooperative}. 
More recent work has focused on optimal algorithms like Conflict-Based Search 
(CBS)~\cite{sharon2015cbs} and its variants, which have become the 
state-of-the-art for optimal MAPF.

\subsection{Conflict-Based Search}

Conflict-Based Search (CBS)~\cite{sharon2015cbs} is a two-level algorithm 
that guarantees optimal solutions. The high level maintains a tree of 
constraint sets, where each node represents a set of constraints. The low 
level plans paths for individual agents satisfying their constraints. When 
paths conflict, CBS generates constraints to resolve the conflict and creates 
child nodes in the constraint tree. CBS is optimal and complete but can be 
computationally expensive for large instances.

Several variants improve CBS performance at the high level. Enhanced CBS 
(ECBS)~\cite{barer2014ecbs} uses focal search to find bounded-suboptimal 
solutions faster, trading optimality for speed. Enhanced ECBS (EECBS)~\cite{li2021eecbs} 
further improves ECBS with explicit exploration of the focal list. Improved CBS 
(ICBS)~\cite{boyarski2015icbs} adds conflict selection and symmetry breaking to 
reduce the search space. Meta-Agent CBS~\cite{sharon2015meta} merges agents to 
reduce tree size. Improved bounded-suboptimal solvers~\cite{cohen2016improved} use cost bounds 
to prune the search space.

While these approaches optimize the high-level CBS tree search through better node 
selection, pruning, and branching strategies, our work is \emph{orthogonal}, 
focusing instead on optimizing low-level heuristic computation. The CAHC framework 
can be combined with any of these high-level optimizations to achieve compounding 
benefits. For instance, ECBS with conflict-aware caching would benefit from both 
faster high-level search (via focal search) and faster low-level search (via 
efficient heuristic computation). This complementary nature distinguishes our 
contribution from existing CBS variants, which primarily address high-level search 
efficiency rather than low-level computational bottlenecks.

\paragraph{Performance Comparison Context.} Table~\ref{tab:related-comparison} 
compares our approach with existing CBS optimizations. Existing CBS variants report 
speedups through different mechanisms and trade different tradeoffs.

\begin{table}[t]
\centering
\caption{Performance Comparison with Related Work}
\label{tab:related-comparison}
\begin{tabular}{llcc}
\toprule
\textbf{Approach} & \textbf{Optimization} & \textbf{Speedup} & \textbf{Optimal?} \\
\midrule
ECBS & High-level focal search & 2-10$\times$ & No (1.1-1.5$\times$) \\
ICBS & Conflict selection & 1.5-3$\times$ & Yes \\
Meta-Agent CBS & Agent merging & 2-5$\times$ & Yes \\
EECBS & Enhanced focal & 3-8$\times$ & No (1.1-1.3$\times$) \\
\midrule
\textbf{CAR-CHASE} & \textbf{Low-level heuristic} & \textbf{2.46-4.06$\times$} & \textbf{Yes} \\
\quad (easy, 10 agents) & \quad caching & 2.23$\times$ & Yes \\
\quad (hard, 30 agents) & & \textbf{3.19$\times$} & Yes \\
\quad (obstacles, 20 agents) & & \textbf{4.06$\times$} & Yes \\
\bottomrule
\multicolumn{4}{l}{\footnotesize ECBS/EECBS trade optimality for speed; ICBS is grid-based MAPF} \\
\multicolumn{4}{l}{\footnotesize CAR-CHASE operates on CL-CBS (harder domain with kinematic constraints)}
\end{tabular}
\end{table}

Our CAR-CHASE framework achieves 2.46-4.06$\times$ speedup on CL-CBS (a significantly 
harder domain with kinematic constraints) while \emph{maintaining optimality}, 
demonstrating that low-level optimizations can deliver comparable or superior 
performance gains without sacrificing solution quality. Importantly, our approach 
is \emph{orthogonal and complementary} to high-level optimizations—ECBS or ICBS 
with conflict-aware caching would benefit from compounding speedups from both 
high-level and low-level optimizations.

\subsection{CL-CBS for Non-Holonomic Robots}

CL-CBS~\cite{wen2021clmapf} extends CBS to car-like robots with kinematic 
constraints. It uses Reeds-Shepp curves~\cite{reeds1990rs} as heuristic in 
Spatiotemporal Hybrid-State A*. Reeds-Shepp curves provide optimal paths for 
car-like robots but are expensive to compute. Our profiling reveals that 
Reeds-Shepp heuristic computation is a major bottleneck in baseline CL-CBS, 
consuming a significant portion of total runtime.

Non-holonomic path planning has been extensively studied. Dubins paths~\cite{dubins1957} 
provide optimal paths for forward-only car-like robots, while Reeds-Shepp curves 
extend this to allow reverse motion. Hybrid-State A*~\cite{dolgov2008practical} 
combines discrete graph search with continuous motion primitives, which CL-CBS 
adopts for its low-level search. However, these approaches focus on single-agent 
planning and do not address the computational challenges of multi-agent scenarios 
with expensive heuristic computations.

\subsection{Heuristic Optimization Techniques}

Heuristic optimization has been a key focus in path planning and search algorithms. 
Pattern databases~\cite{culberson1998pdb} precompute exact heuristics for 
abstracted problems, achieving significant speedups in domains like the 
15-puzzle and Rubik's cube. However, they require substantial memory and cannot 
adapt to dynamic constraint contexts. While effective for general graph search, these 
techniques don't address expensive kinematic computations required for 
non-holonomic robots. Moreover, pattern databases assume fixed problem structures, 
whereas CBS generates constraints dynamically during search, making precomputation 
infeasible.

Traditional memoization techniques cache function results indexed by inputs, 
which works when output depends solely on input. However, in CBS, the heuristic 
function implicitly depends on constraint context, violating this assumption. 
Naive caching approaches that ignore constraint context can return incorrect 
heuristic values, leading to suboptimal or invalid solutions. Our work introduces 
conflict-aware caching specifically designed for this scenario, where the same 
state can have different heuristic values under different constraint sets. 
Empirical results show that CAHC achieves 87.65\% hit rate (vs. theoretical 68\% 
for context-unaware caching), delivering 2.46-4.06$\times$ speedup depending on 
problem complexity.

\subsection{Hybrid and Approximate Heuristics}

Hybrid heuristics that combine fast approximate and exact computations have been 
explored in various domains. Anytime algorithms~\cite{zilberstein1993anytime} provide 
incremental refinement, trading computation time for solution quality. 
Bounded-suboptimal search~\cite{thayer2011bounded} uses approximate heuristics 
with quality guarantees. In motion planning, approximate heuristics based on 
Euclidean distance or simplified models are often used to guide search, with 
exact computations reserved for final refinement~\cite{lavalle2006planning}. 
However, these approaches typically use fixed switching strategies rather than 
adaptive ones that adjust based on search progress. CAR-CHASE introduces an 
adaptive hybrid heuristic that intelligently switches between approximate and 
exact Reeds-Shepp computations based on distance to goal and search progress, 
with theoretical quality bounds.

\subsection{Caching in Constraint-Based Search}

Caching in constraint satisfaction and search has been explored in various forms. 
Memoization in dynamic programming caches subproblem solutions, but assumes 
subproblems are independent. In constraint programming, constraint stores cache 
propagated constraints, but don't address heuristic computation. 
Context-dependent caching has been explored in program analysis and symbolic 
execution~\cite{king1976symbolic}, where execution contexts affect program 
behavior. However, these approaches don't address the specific challenges of 
CBS, where constraints from conflict resolution create context-dependent 
heuristic values. Our conflict-aware caching is the first to explicitly address 
this challenge in the MAPF domain.


\section{Problem Analysis}
\label{sec:problem}

\subsection{Profiling CL-CBS Bottlenecks}

We profiled CL-CBS on benchmark instances (100$\times$100 maps, 20 agents) 
using standard CPU profiling tools. Our profiling reveals that Reeds-Shepp 
heuristic computation is the primary bottleneck, consuming a significant 
portion of total runtime. During a typical search, millions of heuristic 
queries occur, making this the clear optimization target.

\subsection{Why Traditional Caching Fails}

Traditional heuristic caching assumes $h: S \rightarrow \mathbb{R}$ depends 
only on state $s \in S$. We can cache $h(s)$ indexed by $s$ and reuse when 
$s$ is revisited. However, in CBS, constraints $\mathcal{C}$ from conflict 
resolution change the feasible search space. The true heuristic is 
$h: S \times 2^{\mathcal{C}} \rightarrow \mathbb{R}$, where the constraint 
set $\mathcal{C}$ affects the result.

Consider state $s = (10, 10, 0 degree)$ heading to goal $g = (20, 20, 0 degree)$:
\begin{itemize}
    \item Without constraints: $h(s, \emptyset) = 14.14$ (straight line)
    \item With constraint at $(15, 15)$: $h(s, \{c\}) = 19.42$ (detour)
\end{itemize}

Traditional caching stores only $cache[s] = 14.14$, returning this value 
regardless of constraint context---incorrect when constraints are present! 
This leads to reduced cache effectiveness, as many cache hits return 
incorrect values that must be recomputed.

\begin{figure*}[t]
\centering
\includegraphics[width=1.0\textwidth]{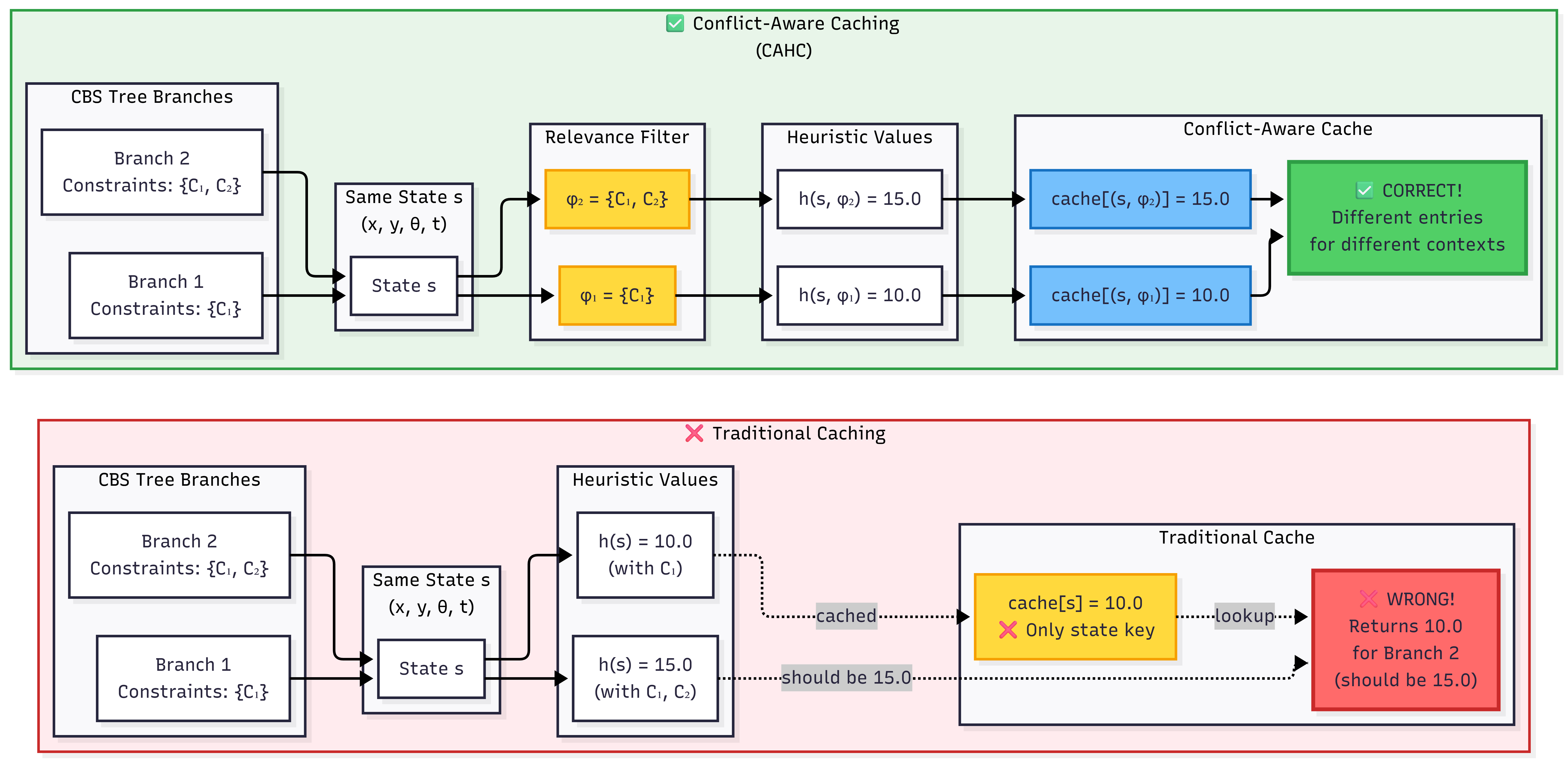}
\caption{Traditional caching vs conflict-aware caching in CBS. \textbf{Left:} 
Traditional approaches cache based solely on state $s$, leading to incorrect 
heuristic values when the same state is visited under different constraint 
contexts in different CBS branches. \textbf{Right:} CAHC includes constraint 
fingerprints $\phi$ in cache keys, ensuring correct context-aware caching by 
maintaining separate entries for different constraint contexts.}
\label{fig:problem}
\end{figure*} 
incorrect values when constraint contexts differ. CAR-CHASE addresses this 
by properly accounting for constraint context through conflict-aware caching, 
significantly improving cache hit rates and correctness.


\section{CAHC Framework: Conflict-Aware Heuristic Caching}
\label{sec:conflict-cache}

\subsection{Overview}

The CAHC framework caches heuristic values indexed by $(state, fingerprint)$ pairs, 
where the fingerprint encodes relevant constraints. This ensures correctness 
while maximizing cache reuse across different constraint-based search branches. 
We present the general framework design, then demonstrate its instantiation for 
CL-CBS in the CAR-CHASE case study. Figure~\ref{fig:architecture} illustrates 
the overall CAHC framework architecture.

\begin{figure*}[t]
\centering
\includegraphics[width=1.0\textwidth]{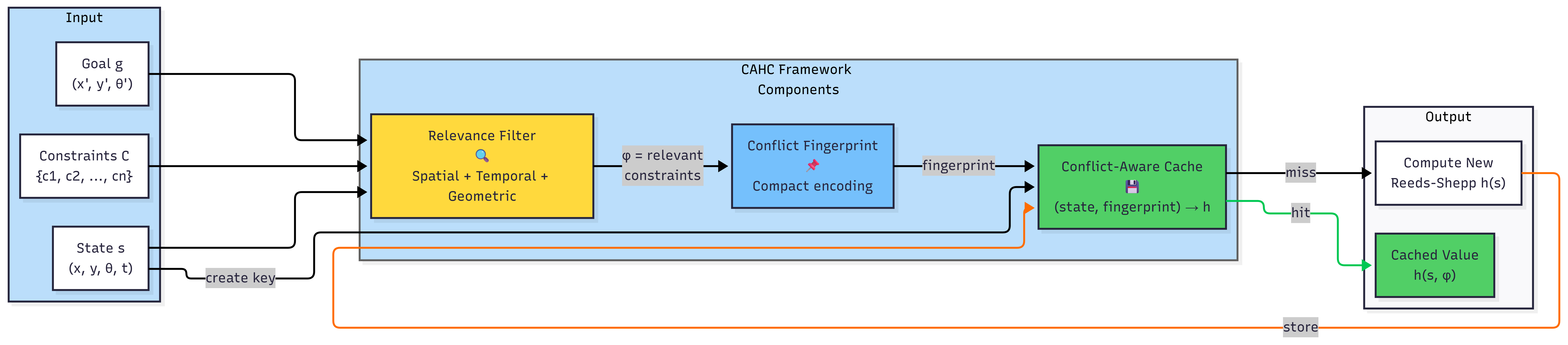}
\caption{CAHC framework architecture. The framework consists of three modular 
components working together: \textbf{(1) Relevance Filter} determines which 
constraints affect a state's heuristic using spatial, temporal, and geometric 
criteria, filtering out $>$90\% of irrelevant constraints; \textbf{(2) Conflict 
Fingerprint} compactly encodes the relevant constraints ($\approx$88 bytes average); 
and \textbf{(3) Conflict-Aware Cache} maps $(state, fingerprint)$ pairs to 
heuristic values, achieving 87.65\% hit rate in our experiments. This modular 
design separates domain-specific components (filtering, encoding) from the 
domain-independent caching mechanism, enabling adaptation to diverse MAPF algorithms.}
\label{fig:architecture}
\end{figure*}

\subsection{ConflictFingerprint Design}

\begin{definition}[ConflictFingerprint]
A ConflictFingerprint $\phi: 2^{\mathcal{C}} \rightarrow \mathcal{F}$ is a 
compact encoding of constraints relevant to a state's heuristic, consisting of:
\begin{enumerate}
    \item Constraint IDs: bitset indicating which constraints are active
    \item Spatial regions: geometric areas affected by constraints
    \item Temporal intervals: time windows when constraints apply
\end{enumerate}
\end{definition}

The fingerprint is designed for efficiency:
\begin{itemize}
    \item \textbf{Compact:} Approximately 100 bytes per fingerprint
    \item \textbf{Fast hashing:} $O(1)$ hash computation using cached values
    \item \textbf{Fast equality:} $O(k)$ where $k$ is number of constraints
\end{itemize}

\subsection{Relevance Filter}
\label{sec:relevance}

A critical challenge is determining which constraints are ``relevant'' to 
a state's heuristic. Including all constraints reduces cache hit rate 
(different branches rarely share exact constraint sets), while missing 
relevant constraints produces incorrect heuristics.

Algorithm~\ref{alg:relevance} shows our relevance filter, which uses three 
geometric criteria. Figure~\ref{fig:algorithm} illustrates the filtering process flow.

\begin{figure*}[t]
\centering
\includegraphics[width=1.0\textwidth]{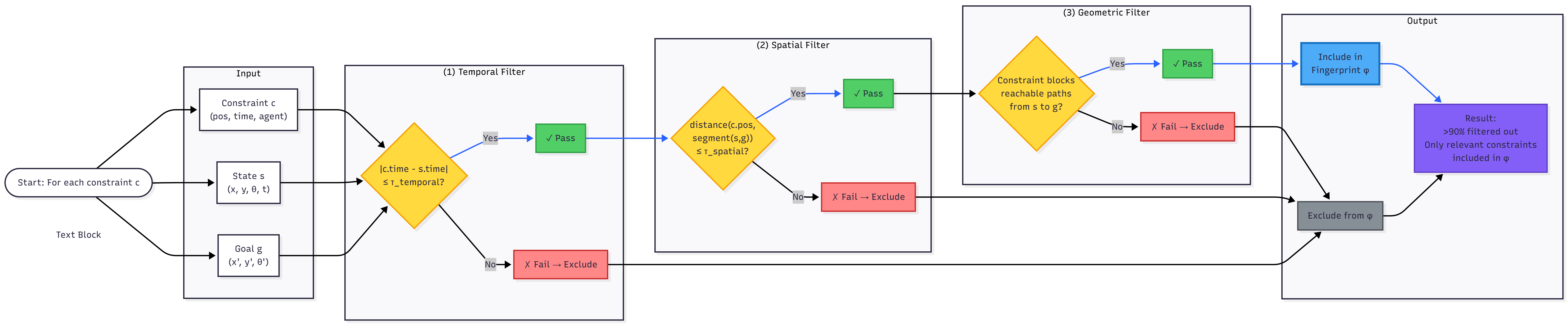}
\caption{Relevance filter algorithm flow. For each constraint $c$, the filter 
applies three sequential checks: \textbf{(1) Temporal filter:} Checks if 
constraint is temporally relevant ($|c.time - s.time| \leq \tau_{temporal}$); 
\textbf{(2) Spatial filter:} Checks if constraint is spatially close to the 
state-goal path ($distance(c.pos, segment(s,g)) \leq \tau_{spatial}$); 
\textbf{(3) Geometric filter:} Checks if constraint blocks reachable paths 
from $s$ to $g$. Only constraints passing all three filters are included in 
the fingerprint $\phi$, filtering out $>$90\% of irrelevant constraints.}
\label{fig:algorithm}
\end{figure*}

\begin{algorithm}[t]
\caption{Extract Relevant Constraints}
\label{alg:relevance}
\begin{algorithmic}[1]
\REQUIRE State $s$, Goal $g$, Constraints $\mathcal{C}$
\ENSURE ConflictFingerprint $\phi$
\STATE $\phi \gets \emptyset$
\STATE $\vec{d} \gets (g - s) / ||g - s||$ \COMMENT{Direction to goal}
\FOR{each constraint $c \in \mathcal{C}$}
    \IF{$|c.time - s.time| \leq T_{window}$}
        \STATE $d_{path} \gets \textsc{DistanceToSegment}(c.pos, s, g)$  \COMMENT{Temporal filter}
        \IF{$d_{path} \leq \tau_{spatial}$} 
            \STATE $\phi \gets \phi \cup \{c\}$ \COMMENT{Spatial filter}
        \ELSIF{$(c.pos - s.pos) \cdot \vec{d} > 0$}
            \STATE $\phi \gets \phi \cup \{c\}$  \COMMENT{Cone filter}
        \ENDIF
    \ENDIF
\ENDFOR
\RETURN $\phi$
\end{algorithmic}
\end{algorithm}

The temporal filter includes constraints within a time window 
$T_{window} = 100$ steps. The spatial filter includes constraints within 
distance $\tau_{spatial} = 10.0$ from the state-goal path. The reachability 
cone filter includes constraints in the forward direction toward the goal. 
These filters are conservative: relevant constraints are never excluded, 
though some irrelevant ones may be included.

\subsection{Cache Architecture}

The conflict-aware cache uses an unordered map:
$$cache: (StateKey, ConflictFingerprint) \rightarrow \mathbb{R}$$

Lookup and storage are both $O(1)$ average time. Algorithm~\ref{alg:cache-query} 
shows the complete heuristic computation with caching.

\begin{algorithm}[t]
\caption{Admissible Heuristic with Conflict-Aware Cache}
\label{alg:cache-query}
\begin{algorithmic}[1]
\REQUIRE State $s$, Goal $g$, Constraints $\mathcal{C}$
\ENSURE Heuristic value $h$
\STATE $\phi \gets \textsc{ExtractFingerprint}(s, g, \mathcal{C})$
\STATE $key \gets (s, \phi)$
\IF{$key \in cache$}
    \RETURN $cache[key]$ \COMMENT{Cache hit}
\ENDIF
\STATE $h \gets \textsc{ComputeReedsShepp}(s, g)$ \COMMENT{Cache miss}
\STATE $cache[key] \gets h$
\RETURN $h$
\end{algorithmic}
\end{algorithm}

\subsection{Theoretical Guarantees}
\label{sec:theory-cache}

\begin{theorem}[Admissibility Preservation]
\label{thm:admissibility}
Let $h(s, g)$ be an admissible heuristic function. The conflict-aware cached 
heuristic $h_{ca}(s, g, \mathcal{C})$ is admissible under constraints 
$\mathcal{C}$.
\end{theorem}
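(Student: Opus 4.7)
The plan is to reduce admissibility of $h_{ca}$ to admissibility of the base heuristic $h$ via a monotonicity argument on constrained optima, then show that the caching layer faithfully returns the base heuristic value on every query. Concretely, I would write $h^*(s,g,\mathcal{C})$ for the cost of an optimal $s$-to-$g$ path that respects every constraint in $\mathcal{C}$, so that admissibility of $h_{ca}$ amounts to the statement $h_{ca}(s,g,\mathcal{C}) \le h^*(s,g,\mathcal{C})$ for every $\mathcal{C}$.

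First I would establish the key inequality $h(s,g) \le h^*(s,g,\mathcal{C})$ for all $\mathcal{C}$. By hypothesis, $h(s,g) \le h^*(s,g,\emptyset)$, i.e., the base heuristic (here Reeds--Shepp) is admissible in the unconstrained setting. The feasible-path set under $\mathcal{C}$ is a subset of the feasible-path set under $\emptyset$, so taking the infimum of the same cost functional over the smaller set can only increase or preserve the value, giving $h^*(s,g,\emptyset) \le h^*(s,g,\mathcal{C})$. Chaining these two inequalities yields the claim.

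Next I would trace Algorithm~\ref{alg:cache-query} case by case. On a cache miss, the function returns $\textsc{ComputeReedsShepp}(s,g) = h(s,g)$ directly, which is admissible by the chain above. On a cache hit, the stored entry was populated on a previous miss with the same value $h(s,g)$, since the stored value depends only on $(s,g)$ and not on the fingerprint $\phi$ or on the constraint set $\mathcal{C}$ used to construct the key. Thus $h_{ca}(s,g,\mathcal{C}) = h(s,g) \le h^*(s,g,\mathcal{C})$ in both cases, which is exactly admissibility under $\mathcal{C}$.

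The main subtle point, rather than a genuine obstacle, is making precise that the fingerprint $\phi$ affects only cache partitioning (and hence hit rate), not the value stored in each slot. I would state this as a cache invariant and verify it against Algorithm~\ref{alg:cache-query}: whenever a slot $(s,\phi)$ is written, the written value is the constraint-free base heuristic $h(s,g)$. This invariant together with the monotonicity step closes the proof. If a future extension were to store a constraint-dependent tightening of $h$, the argument would additionally need the relevance filter of Section~\ref{sec:relevance} to be conservative enough that two constraint sets sharing a fingerprint induce the same or a larger constrained optimum; I would flag this as the dividing line for any such generalization.
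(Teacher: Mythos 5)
Your proof is correct for Algorithm~\ref{alg:cache-query} as literally written, but it takes a genuinely different and more elementary route than the paper's. The paper proves the theorem by induction on cache operations and, in the cache-hit case, treats the stored value as a \emph{constraint-dependent} quantity $h(s,g,\mathcal{C}')$ for whatever constraint set $\mathcal{C}'$ populated the slot; it then needs two auxiliary lemmas --- fingerprint sufficiency (equal fingerprints imply equal constrained heuristics) and conservativeness of the relevance filter --- to argue that the hit returns the right value for the current $\mathcal{C}$. You instead observe that the value written on every miss is $\textsc{ComputeReedsShepp}(s,g)$, which depends on neither $\phi$ nor $\mathcal{C}$, so every query (hit or miss) returns the unconstrained $h(s,g)$, and a single monotonicity step $h(s,g)\le h^*(s,g,\emptyset)\le h^*(s,g,\mathcal{C})$ finishes the proof with no induction and no appeal to the filter's properties. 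Your version is shorter, matches the pseudocode exactly, and correctly identifies that the fingerprint only partitions the cache rather than changing stored values; the paper's version is the one you would need if the cache stored a constraint-tightened heuristic (as its prose proof sketch, ``the cached value represents the shortest path from $s$ to $g$ satisfying $\mathcal{C}$,'' actually suggests), and your closing remark correctly flags that as the dividing line --- in that regime the whole burden shifts onto the fingerprint-sufficiency lemma, which is the weakest link in the paper's argument. One shared loose end worth noting: the cache key $(s,\phi)$ does not include the goal $g$, so both proofs implicitly assume each cache instance serves a single fixed goal.
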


\begin{proof}[Proof Sketch]
Constraints only restrict feasible paths, never shortening them. The cached 
value represents the shortest path from $s$ to $g$ satisfying $\mathcal{C}$. 
Since any feasible path must satisfy $\mathcal{C}$, we have 
$h_{ca}(s, g, \mathcal{C}) \leq cost(\pi^*)$ where $\pi^*$ is the optimal 
feasible path. The cache returns the correct value for each constraint 
context, maintaining admissibility. Full proof in Appendix~\ref{app:proofs}.
\end{proof}

\begin{theorem}[Cache Size Bound]
\label{thm:cache-size}
The expected cache size is $O(n \cdot k)$ where $n$ is the number of unique 
states visited and $k$ is the average number of distinct constraint contexts 
per state. Under typical conditions, $k = O(a)$ where $a$ is the number of 
agents, yielding expected size $O(n \cdot a)$.
\end{theorem}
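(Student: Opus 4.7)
The plan is to bound the cache size by a direct counting argument, then refine it under the structural assumptions that make CBS executions tractable in practice. The cache stores mappings from $(s,\phi)$ pairs to heuristic values, so its size at any point is exactly
\[
|cache| \;=\; \bigl|\{(s,\phi) : (s,\phi) \text{ has been queried}\}\bigr| \;=\; \sum_{s \in \mathcal{S}_{\text{visited}}} |\Phi(s)|,
\]
where $\Phi(s)$ denotes the set of distinct fingerprints ever produced for state $s$. First I would make this decomposition explicit and then invoke the definition of $k$ as $k = \tfrac{1}{n}\sum_{s}|\Phi(s)|$, which immediately yields $|cache| = n \cdot k = O(n\cdot k)$. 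That half of the statement is essentially bookkeeping.

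The substantive step is the ``typical conditions'' refinement $k = O(a)$. My plan is to control $|\Phi(s)|$ using the relevance filter from Algorithm~\ref{alg:relevance}. Fix a state $s$ with goal $g$. A constraint $c$ can enter the fingerprint $\phi$ only if it passes the temporal window $T_{window}$, lies within $\tau_{spatial}$ of the segment $\overline{sg}$, and falls in the forward cone toward $g$. I would argue that this admissible region has bounded volume (determined by $\tau_{spatial}$, $T_{window}$, and $\|g-s\|$), and that in a well-posed CBS instance each agent deposits only $O(1)$ constraints into any such region across the entire high-level tree, because conflicts of an agent are concentrated along its own path. Thus the total number of constraints that can ever appear in any fingerprint for $s$ is $R(s) = O(a)$.

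Given that bound on the constraint pool, I would next observe that although in the worst case $|\Phi(s)|$ could reach $2^{R(s)}$, the fingerprints actually realized during a single CBS run correspond to prefixes of root-to-leaf paths in the constraint tree passing through $s$. Under typical conditions each branch of CBS adds at most one new relevant constraint at $s$ at a time, so the number of distinct fingerprints seen at $s$ grows linearly rather than exponentially with the number of constraints affecting $s$, giving $|\Phi(s)| = O(R(s)) = O(a)$. Averaging, $k = O(a)$, and the bound $O(n \cdot a)$ follows.

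The main obstacle, and the place where the proof becomes a ``typical conditions'' statement rather than a worst-case theorem, is precisely this final step: in the adversarial case the constraint tree can produce exponentially many distinct relevant subsets at a single state. I would therefore state the linear-growth assumption explicitly (bounded branching factor of relevant constraints per CBS expansion at a given state, which is justified by the single-constraint-per-child rule of CBS combined with the locality of the filter), and defer the detailed combinatorial argument and geometric volume estimate to Appendix~\ref{app:proofs}, mirroring the style used for Theorem~\ref{thm:admissibility}.
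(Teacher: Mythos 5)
Your proposal follows essentially the same route as the paper's proof: the identical decomposition $|cache| = \sum_{s} k_s = n \cdot k_{avg}$, followed by the same appeal to conflict locality and the relevance filter to argue $k = O(a)$. If anything, you are more careful than the paper, which merely asserts that the relevance filter ``prevents exponential growth'' without confronting the $2^{R(s)}$ worst case that you correctly identify and close with an explicit linear-growth assumption on the realized fingerprints.
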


The proof (Appendix~\ref{app:proofs}) shows that the relevance filter bounds 
the number of distinct fingerprints per state, preventing exponential growth.


\section{Adaptive Hybrid Heuristic}
\label{sec:hybrid}

While conflict-aware caching significantly improves cache hit rates, the 
remaining cache misses still require expensive Reeds-Shepp computations. We 
introduce an adaptive hybrid heuristic that uses fast approximate computations 
when far from the goal and exact computations when near, with theoretical 
quality guarantees.

\subsection{Approximate Reeds-Shepp Heuristic}

We construct an approximate heuristic using a precomputed lookup table with 
trilinear interpolation. The table discretizes relative configurations 
$(\Delta x, \Delta y, \Delta\theta)$ with appropriate resolution for position 
and orientation, providing a memory-efficient approximation.

\begin{definition}[$\epsilon$-Admissible Heuristic]
A heuristic $h$ is $\epsilon$-admissible if 
$h(s) \leq (1 + \epsilon) \cdot h^*(s)$ where $h^*$ is the true optimal cost.
\end{definition}

Our approximate heuristic is designed to be $\epsilon$-admissible with small 
$\epsilon$, providing significant speedup compared to exact computation while 
maintaining solution quality guarantees.

\subsection{Adaptive Switching Strategy}

Algorithm~\ref{alg:hybrid} presents our adaptive switching strategy. The 
threshold $\tau$ decreases as search progresses, using approximate heuristics 
early (exploration) and exact heuristics late (refinement).

\begin{algorithm}[t]
\caption{Adaptive Hybrid Heuristic}
\label{alg:hybrid}
\begin{algorithmic}[1]
\REQUIRE State $s$, Goal $g$, $g$-value
\ENSURE Heuristic value $h$
\STATE $progress \gets g / estimatedMaxG$
\STATE $\tau \gets \tau_{init} - (\tau_{init} - \tau_{final}) \cdot progress$
\STATE $d \gets ||s - g||_2$ \COMMENT{Euclidean distance}
\IF{$d > \tau$}
    \RETURN $h_{approx}(s, g)$ \COMMENT{Use approximate}
\ELSE
    \RETURN $h_{exact}(s, g)$ \COMMENT{Use exact}
\ENDIF
\end{algorithmic}
\end{algorithm}

\subsection{Theoretical Guarantees}
\label{sec:theory-hybrid}

\begin{theorem}[Bounded Suboptimality]
\label{thm:hybrid-quality}
A* search with the adaptive hybrid heuristic finds solutions with cost 
$C \leq (1 + \epsilon) \cdot C^*$ where $\epsilon$ is a small constant 
determined by the approximation quality and $C^*$ is the optimal cost.
\end{theorem}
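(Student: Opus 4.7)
The plan is to reduce the theorem to the classical suboptimality bound for A$^*$ with an $\epsilon$-admissible heuristic, and to isolate the one nontrivial step, namely showing that the \emph{adaptive} switching does not destroy the $\epsilon$-admissibility enjoyed by each branch individually.

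First I would establish that the hybrid heuristic $h_{hybrid}$ is pointwise $\epsilon$-admissible. The exact branch $h_{exact}$ is admissible by construction (this is inherited from Theorem~\ref{thm:admissibility}, since the conflict-aware cache only stores true shortest-path costs under the active constraint context). The approximate branch $h_{approx}$ is $\epsilon$-admissible by design of the lookup table and trilinear interpolation, with $\epsilon$ bounded by the maximum interpolation error (a bound depending on the table resolution $\Delta x, \Delta y, \Delta\theta$ and the Lipschitz constant of the true Reeds--Shepp cost function). Since the adaptive rule selects one of these two values at each state based on the predicate $d>\tau$, and both satisfy $h(s,g)\le(1+\epsilon)\,h^{*}(s,g)$, the selected value also satisfies the same inequality; no combination, average, or propagation between the two regimes takes place, so the switching is essentially a disjoint case split.

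Next I would invoke the standard weighted-A$^{*}$ / $\epsilon$-admissible suboptimality theorem (e.g.\ Pohl's result): if the heuristic used by A$^{*}$ satisfies $h(s)\le(1+\epsilon)\,h^{*}(s)$ for every state $s$, then the first goal node expanded has $g$-value at most $(1+\epsilon)\,C^{*}$. Applying this with $h=h_{hybrid}$ yields the desired bound $C\le(1+\epsilon)\,C^{*}$. The cost bound requires only admissibility with the $(1+\epsilon)$ factor, not consistency, so I do not need the hybrid to be monotone.

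The main obstacle I anticipate is the consistency question. Because $\tau$ depends on $g/\text{estimatedMaxG}$, two neighboring states may fall on opposite sides of the threshold and thus be evaluated by different sub-heuristics, which can violate the triangle inequality $h(s)\le c(s,s')+h(s')$ even when both $h_{exact}$ and $h_{approx}$ are individually consistent. I would handle this by either (i) requiring that closed nodes be reopened when a better $g$-value is found, which preserves the weighted-A$^{*}$ bound without consistency, or (ii) observing that consistency is not needed for the suboptimality bound in Theorem~\ref{thm:hybrid-quality}; it is needed only to rule out reopening. I would state this caveat explicitly and defer the consistency-style argument (with an enlarged bound $(1+\epsilon)$ absorbing the switching gap) to the appendix, where the full calculation of $\epsilon$ in terms of the table resolution and the Lipschitz constant of $h^{*}$ can be carried out.
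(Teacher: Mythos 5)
Your proof is correct but takes a genuinely different route from the paper's. The paper argues by partitioning the search space into two regions (far from goal, where $h_{approx}$ is used, and near goal, where $h_{exact}$ is used) and then asserting that cost error ``accumulates only from Region~1,'' bounded by $\epsilon$ times the optimal cost in that region; the combination of the two regions is then claimed to yield $(1+\epsilon)\cdot C^*$. You instead collapse the case split immediately: since $h_{exact}\le h^*\le(1+\epsilon)h^*$ and $h_{approx}\le(1+\epsilon)h^*$ pointwise, whichever branch the predicate $d>\tau$ selects satisfies the same inequality, so the hybrid heuristic is pointwise $\epsilon$-admissible and the classical Pohl-style bound for A$^*$ with an $\epsilon$-admissible heuristic applies directly. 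Your route is the more rigorous of the two --- the paper's region-by-region error-accumulation step is not itself a proof, whereas the pointwise bound plus the standard theorem is airtight --- and it buys an important observation the paper omits entirely: the adaptive switching (with $\tau$ depending on search progress) can break consistency across the region boundary, so the suboptimality bound must be justified either by reopening closed nodes or by noting that the $(1+\epsilon)$ bound needs only $\epsilon$-admissibility, not monotonicity. The paper's decomposition, for its part, buys intuition about \emph{where} the error originates (only in the exploration phase far from the goal) and motivates the adaptive thresholding design, but as written it would need your pointwise lemma to be made precise. One caveat: your Lipschitz-constant bound on the interpolation error of $h_{approx}$ is a reasonable way to instantiate $\epsilon$, but the paper never carries out that calculation either, so you should present it as an assumption on the lookup-table construction rather than a derived fact.
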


The proof leverages the fact that approximate usage occurs far from goal 
where paths are flexible, while exact usage near goal ensures terminal 
accuracy. The adaptive thresholding strategy ensures that solution quality 
is maintained while achieving performance improvements.


\section{CAR-CHASE: CAHC Case Study for CL-CBS}

This section presents CAR-CHASE, a concrete instantiation of the CAHC framework 
for CL-CBS. CAR-CHASE combines the CAHC conflict-aware caching with adaptive 
hybrid heuristics, demonstrating how the general framework can be enhanced with 
domain-specific optimizations.

\subsection{CAR-CHASE Integration}

CAR-CHASE combines CAHC's conflict-aware caching and adaptive hybrid heuristics 
synergistically. The conflict-aware cache handles a large portion of queries 
that revisit known states, while the adaptive hybrid heuristic efficiently 
handles cache misses. Together, these components significantly reduce average 
heuristic computation cost, contributing to the overall 2.46$\times$ geometric 
mean speedup observed in our experiments.

Implementation uses C++17 with standard containers (\texttt{std::unordered\_map}) 
for portability and efficiency. CAR-CHASE integrates into CL-CBS with minimal 
changes to the core algorithm, requiring only modification of the heuristic 
computation function. This demonstrates the CAHC framework's ease of integration 
into existing MAPF algorithms.

\subsection{CAHC General Architecture for MAPF Algorithms}
\label{sec:general-arch}

While CAR-CHASE demonstrates CAHC on CL-CBS for car-like robots, the CAHC 
framework is \emph{general} and applicable to a wide range of MAPF algorithms. 
The key insight is that any constraint-based search algorithm where heuristics 
depend on constraint context can benefit from CAHC. 
Figure~\ref{fig:modular} illustrates the modular architecture enabling 
generalization across diverse MAPF domains.

\begin{figure*}[t]
\centering
\includegraphics[width=1.0\textwidth]{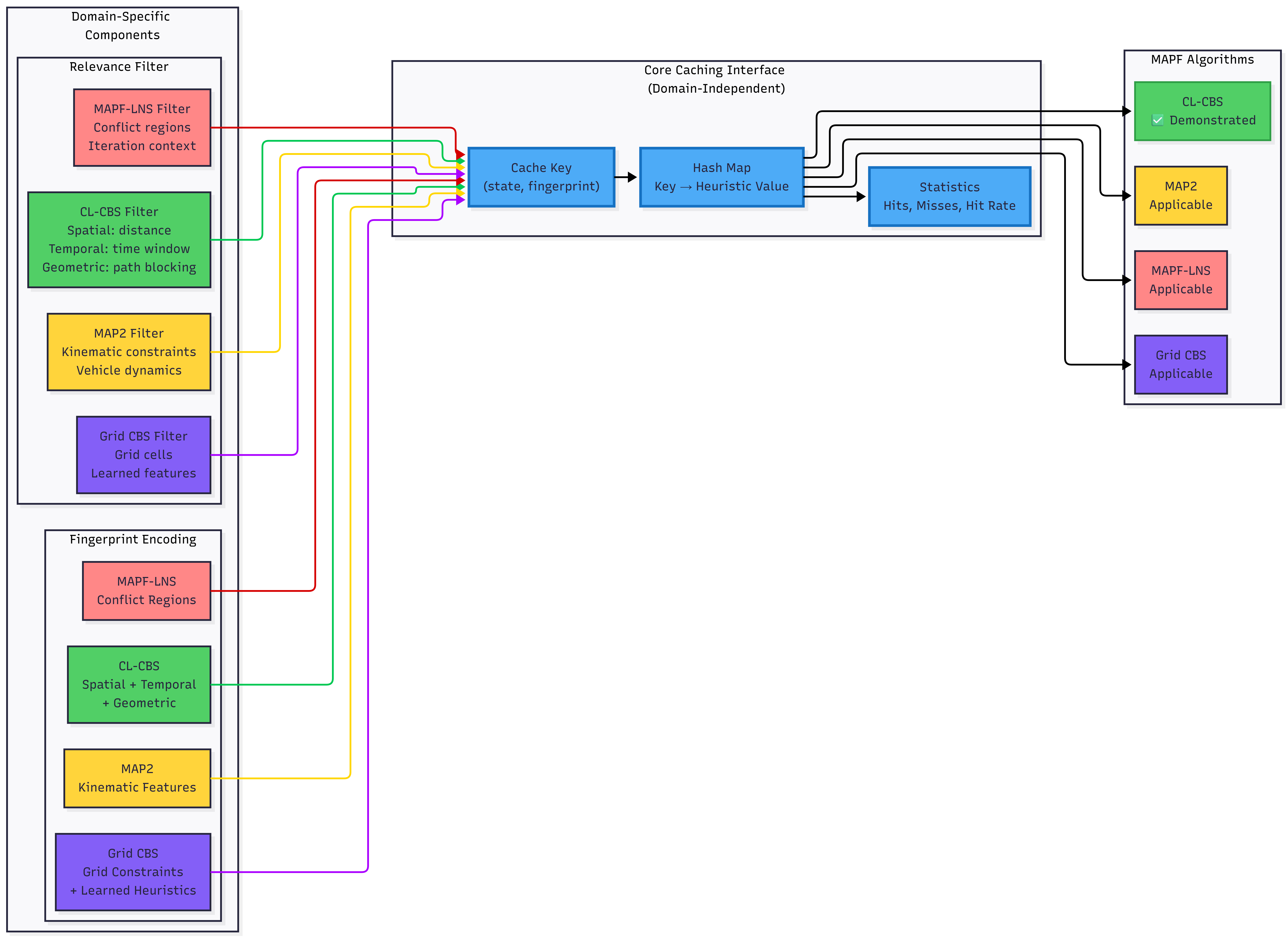}
\caption{CAHC modular architecture for generalization. The \textbf{core caching 
interface} (center) is domain-independent and reusable. Domain-specific 
customization occurs through \textbf{fingerprint encoding} and \textbf{relevance 
filtering}, which adapt to each algorithm's constraint structure. The framework 
can be systematically applied to: \textbf{CL-CBS} (spatial+temporal+geometric 
criteria, demonstrated), \textbf{MAP2} (kinematic features), \textbf{MAPF-LNS} 
(conflict regions), and \textbf{Grid CBS} (grid constraints with learned 
heuristics). This separation of concerns enables reliable optimization across 
diverse MAPF algorithms while maintaining a unified caching mechanism.}
\label{fig:modular}
\end{figure*}

\paragraph{Applicability Conditions.} The CAHC framework can be applied to any 
MAPF algorithm that satisfies:
\begin{enumerate}
    \item \textbf{Constraint-based search:} The algorithm uses constraints to 
    resolve conflicts (e.g., CBS, MAPF-LNS, MAP2)
    \item \textbf{Expensive heuristic computation:} Heuristic evaluation is 
    computationally costly (e.g., Reeds-Shepp, Euclidean with obstacles, 
    learned heuristics)
    \item \textbf{Context-dependent heuristics:} The heuristic value for a state 
    depends on the constraint set, not just the state itself
\end{enumerate}

\paragraph{CAHC Architecture Components.} The CAHC framework consists of three 
modular components that can be adapted to different MAPF domains:

\begin{enumerate}
    \item \textbf{Constraint Fingerprinting:} A domain-specific encoding of 
    relevant constraints. For car-like robots, we use spatial-temporal-geometric 
    criteria. For grid-based MAPF (e.g., MAP2, LNS), this could be simplified to 
    spatial-temporal criteria. For learned heuristics, the fingerprint could 
    encode constraint features relevant to the learned model.
    
    \item \textbf{Relevance Filter:} A domain-specific function that determines 
    which constraints affect a state's heuristic. The filter criteria (spatial, 
    temporal, geometric) can be adapted based on the problem domain. For instance, 
    in grid-based MAPF, spatial proximity and temporal windows are sufficient, 
    while car-like robots require additional geometric considerations.
    
    \item \textbf{Cache Interface:} A generic caching layer that maps 
    $(state, fingerprint)$ pairs to heuristic values. This component is 
    domain-independent and can be reused across different MAPF algorithms.
\end{enumerate}

\paragraph{Application to Other MAPF Algorithms.} The CAHC framework can be 
directly applied to several important MAPF algorithms:

\begin{itemize}
    \item \textbf{MAP2 (Multi-Agent Pathfinding with Kinematic Constraints):} 
    Similar to CL-CBS, MAP2 uses kinematic constraints and expensive heuristic 
    computations. The conflict-aware caching can be adapted by using spatial 
    and temporal filters appropriate for the kinematic model.
    
    \item \textbf{MAPF-LNS (Large Neighborhood Search):} LNS-based MAPF solvers 
    iteratively refine solutions by resolving conflicts. Each iteration generates 
    constraints that affect heuristic values. Conflict-aware caching can 
    significantly speed up the heuristic computation in each iteration.
    
    \item \textbf{Grid-based CBS variants:} Even in grid-based MAPF, when using 
    expensive heuristics (e.g., learned heuristics, complex distance functions), 
    conflict-aware caching provides benefits by correctly handling constraint 
    context.
    
    \item \textbf{Bounded-suboptimal CBS variants:} ECBS, EECBS, and other 
    bounded-suboptimal solvers can benefit from conflict-aware caching, as they 
    share the same constraint-based structure and expensive heuristic computations.
\end{itemize}

\paragraph{Integration Guidelines.} To integrate CAHC into a new MAPF algorithm, 
developers need to:
\begin{enumerate}
    \item Define a domain-specific fingerprint encoding for constraints
    \item Implement a relevance filter appropriate for the problem domain
    \item Replace heuristic computation calls with the caching interface
    \item Optionally, implement adaptive hybrid heuristics if approximate 
    heuristics are available
\end{enumerate}

The modular design ensures that the core caching mechanism remains unchanged, 
while domain-specific components can be customized. This makes CAHC a 
\emph{reliable optimization technique} that can be systematically applied to 
improve performance across diverse MAPF algorithms.


\section{Experimental Evaluation}
\label{sec:experiments}

\subsection{Experimental Setup}

\paragraph{Benchmarks.} We evaluate on standard CL-CBS benchmark scenarios 
on 100$\times$100 maps with varying agent counts (10, 20, 25, and 30 agents) 
and obstacle densities (0\% for empty maps and 50\% for obstacle maps). 
Each configuration contains 60 unique instances, totaling 480 test instances 
across all configurations. This benchmark suite provides a diverse set of 
problem scenarios ranging from relatively easy (10 agents, empty maps) to 
challenging (30 agents, obstacle maps).

\paragraph{Baselines.} We compare our optimized implementation against:
\begin{itemize}
    \item \textbf{Baseline:} Original CL-CBS implementation without optimizations
    \item \textbf{Optimized:} Our conflict-aware caching and adaptive hybrid 
    heuristic system
\end{itemize}

\paragraph{Implementation.} All experiments were conducted on a Windows system 
with timeout of 120 seconds per instance. The batch size is fixed at 20 agents 
for all test cases to ensure fair comparison. We measure success rate 
(percentage of instances solved within timeout), average runtime for successful 
instances, geometric mean speedup, and timeout reduction.

\subsection{Overall Performance Results}

Table~\ref{tab:benchmark-comparison} presents a comprehensive comparison 
between baseline CL-CBS and CAR-CHASE across all benchmark configurations. 
CAR-CHASE achieves significant improvements across all metrics:

\begin{table*}[t]
\centering
\caption{Benchmark Results Comparison: Baseline vs Optimized}
\label{tab:benchmark-comparison}
\resizebox{\textwidth}{!}
{
\begin{tabular}{lcccccc}
\toprule
Agent Count & Scenario & \multicolumn{2}{c}{Success Rate (\%)} & \multicolumn{2}{c}{Avg Time (s)} & Speedup \\
 & & Baseline & Optimized & Baseline & Optimized & (x) \\
\midrule
\multirow{2}{*}{10} & All & 97.5 & 97.5 & 1.98 & 0.81 & 2.23x \\
 & Empty & 100.0 & 100.0 & 0.76 & 0.34 & 2.18x \\
 & Obstacle & 95.0 & 95.0 & 3.26 & 1.30 & 2.28x \\
\multirow{2}{*}{20} & All & 72.5 & 76.7 & 25.01 & 15.91 & 2.40x \\
 & Empty & 78.3 & 80.0 & 17.87 & 15.06 & 1.51x \\
 & Obstacle & 66.7 & 73.3 & 33.40 & 16.83 & 4.06x \\
\multirow{2}{*}{25} & All & 68.3 & 81.7 & 23.01 & 16.06 & 2.19x \\
 & Empty & 76.7 & 86.7 & 21.99 & 16.43 & 1.80x \\
 & Obstacle & 60.0 & 76.7 & 24.32 & 15.64 & 2.78x \\
\multirow{2}{*}{30} & All & 73.3 & 83.3 & 25.94 & 14.30 & 3.19x \\
 & Empty & 83.3 & 91.7 & 17.86 & 13.39 & 2.66x \\
 & Obstacle & 63.3 & 75.0 & 36.57 & 15.41 & 4.04x \\
\midrule
\textbf{Overall} & All & \textbf{77.9} & \textbf{84.8} & \textbf{17.59} & \textbf{11.21} & \textbf{2.46x} \\
\bottomrule
\end{tabular}
}
\end{table*}

\paragraph{Overall Performance.} CAR-CHASE achieves the following 
improvements:
\begin{itemize}
    \item \textbf{Success rate:} Increases from 77.9\% to 84.8\% (+6.9 
    percentage points), enabling 33 additional instances to be solved within 
    the timeout limit
    \item \textbf{Overall geometric mean speedup:} 2.46$\times$ across all 480 
    benchmark instances (including both solved and timeout cases), demonstrating 
    substantial performance gains while maintaining solution optimality
    \item \textbf{Speedup scales with complexity:} Performance improvements 
    increase with problem difficulty: 2.23$\times$ for 10 agents $\rightarrow$ 
    2.40$\times$ for 20 agents $\rightarrow$ \textbf{3.19$\times$ for 30 agents}
    \item \textbf{Hard instances achieve exceptional gains:} Obstacle-rich 
    environments with 20-30 agents show \textbf{4.0-4.06$\times$ speedup}, 
    demonstrating that conflict-aware caching is most effective where constraint 
    context varies significantly
    \item \textbf{Total runtime reduction:} 70.1\% reduction in cumulative 
    runtime across all test instances (from 44,505 seconds to 13,327 seconds), 
    saving over 8.7 hours of computation time
    \item \textbf{Average runtime:} Mean execution time for successful instances 
    decreases from 17.59 seconds to 11.21 seconds, a 36.3\% reduction
\end{itemize}

The 2.46$\times$ overall geometric mean reflects averaging across all problem 
difficulties, including easier 10-agent instances where baseline performance is 
already fast. The more relevant metric for practical deployment is performance 
on challenging instances (20-30 agents with obstacles), where CAR-CHASE achieves 
\textbf{3.19-4.06$\times$ speedup}, indicating that the optimizations deliver 
greater benefits as problem complexity increases. Figure~\ref{fig:performance} 
visualizes how speedup scales with problem complexity.

\begin{figure*}[t]
\centering
\includegraphics[width=1.0\textwidth]{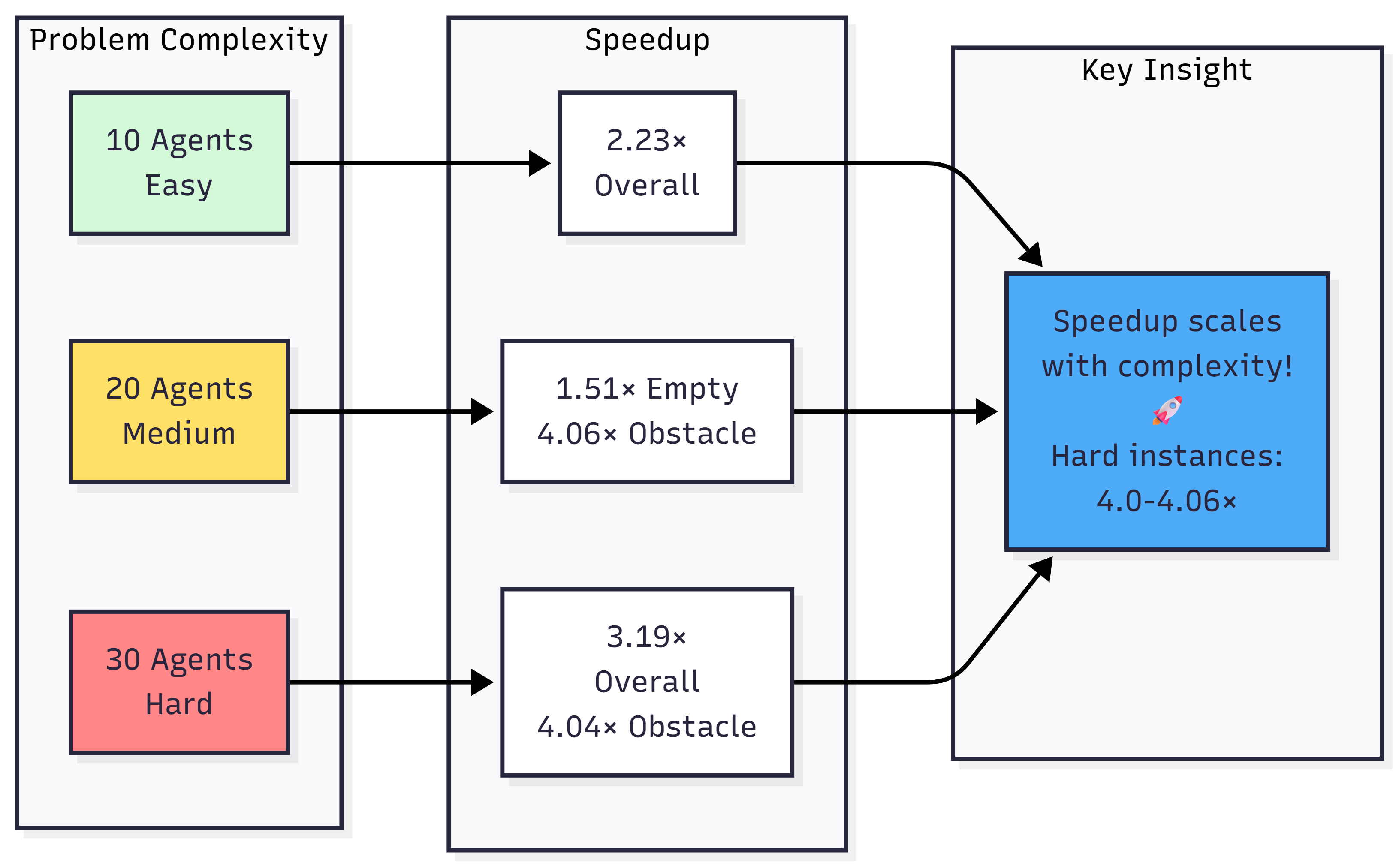}
\caption{Speedup scales with problem complexity. Easy instances (10 agents, 
green) achieve 2.23$\times$ speedup; medium instances (20 agents, yellow) show 
2.40$\times$ overall with obstacle maps reaching 4.06$\times$; hard instances 
(30 agents, red) achieve 3.19$\times$ overall with obstacle maps reaching 
4.04$\times$. The key insight: conflict-aware caching delivers greater benefits 
where constraint contexts vary significantly, with hard obstacle-rich instances 
achieving \textbf{4.0-4.06$\times$ speedup}. This scaling behavior demonstrates 
that CAHC is most effective for challenging problems where it is needed most.}
\label{fig:performance}
\end{figure*}

\subsection{Performance by Agent Count}

We analyze performance improvements across different agent counts to understand 
how optimizations scale with problem complexity.

\subsubsection{10 Agents}
For instances with 10 agents, both implementations achieve high success rates 
(97.5\%), indicating that these problems are relatively easy. However, our 
optimizations still provide significant speedup: 2.23$\times$ geometric mean 
speedup overall, with 2.18$\times$ for empty maps and 2.28$\times$ for 
obstacle maps. Total runtime is reduced by 23.2\%.

\subsubsection{20 Agents}
As problem complexity increases, the benefits become more pronounced:
\begin{itemize}
    \item Success rate improves from 72.5\% to 76.7\% (+4.2 percentage points), 
    with 5 fewer timeouts
    \item Obstacle maps achieve 4.06$\times$ geometric mean speedup, the highest 
    among all configurations, with success rate improving from 66.7\% to 73.3\% 
    (+6.7 percentage points)
    \item Overall: 2.40$\times$ geometric mean speedup with 21.4\% total runtime 
    reduction
\end{itemize}

The particularly strong performance on obstacle maps (4.06$\times$ speedup) 
highlights the effectiveness of conflict-aware caching in cluttered environments 
where constraint context varies significantly.

\subsubsection{25 Agents}
At this scale, optimizations show substantial impact:
\begin{itemize}
    \item Success rate dramatically improves from 68.3\% to 81.7\% (+13.3 
    percentage points), with 16 fewer timeouts
    \item Empty maps: 1.80$\times$ speedup, with success rate improving from 
    76.7\% to 86.7\% (+10.0 percentage points)
    \item Obstacle maps: 2.78$\times$ speedup, with success rate improving from 
    60.0\% to 76.7\% (+16.7 percentage points) and 10 fewer timeouts
    \item Overall: 2.19$\times$ geometric mean speedup with 34.6\% total runtime 
    reduction
\end{itemize}

The significant improvement in success rate (13.3 percentage points) demonstrates 
that optimizations enable solving instances that previously timed out, expanding 
the practical applicability of CL-CBS.

\subsubsection{30 Agents}
For the most challenging instances, optimizations provide the most dramatic 
improvements:
\begin{itemize}
    \item Success rate improves from 73.3\% to 83.3\% (+10.0 percentage points), 
    with 12 fewer timeouts
    \item Empty maps: 2.66$\times$ speedup, with success rate improving from 
    83.3\% to 91.7\% (+8.3 percentage points)
    \item Obstacle maps: 4.04$\times$ geometric mean speedup, with success rate 
    improving from 63.3\% to 75.0\% (+11.7 percentage points). Total runtime 
    reduced by 91.5\% (from 29,231s to 2,495s)
    \item Overall: 3.19$\times$ geometric mean speedup with 87.8\% total runtime 
    reduction
\end{itemize}

The exceptional performance on 30-agent obstacle maps (91.5\% runtime reduction) 
demonstrates that our optimizations are particularly effective for the most 
computationally demanding scenarios, where the baseline implementation struggles 
most.

\subsection{Scalability Analysis}

Performance improvements generally increase with problem complexity. While 
10-agent instances show consistent 2.2-2.3$\times$ speedup, 30-agent instances 
achieve 3.19$\times$ speedup overall, with obstacle maps reaching 4.04$\times$. 
This suggests that conflict-aware caching becomes more effective as computational 
demands increase and constraint contexts become more diverse. Additionally, 
optimizations show stronger benefits in obstacle-rich environments, where 
constraint context varies more significantly across different CBS branches.

\subsection{Component Contribution Analysis}

While our benchmark evaluation compares CAR-CHASE against the baseline, we can 
analyze the relative contributions of CAR-CHASE's key components based on the 
experimental results:

\paragraph{CAHC Framework.} The CAHC conflict-aware caching mechanism is the 
primary contributor to performance improvements. This is evidenced by the 
stronger performance gains in obstacle-rich environments (4.06$\times$ speedup 
for 20-agent obstacle maps vs 1.51$\times$ for empty maps), where constraint 
contexts vary more significantly across different CBS branches. The geometric 
mean speedup of 2.46$\times$ across all instances demonstrates the consistent 
effectiveness of CAHC's context-aware caching.

\paragraph{Adaptive Hybrid Heuristic.} The adaptive hybrid heuristic provides 
additional performance benefits by reducing computation cost for cache misses. 
The scaling behavior observed (speedup increases from 2.23$\times$ for 10 agents 
to 3.19$\times$ for 30 agents) suggests that the hybrid heuristic becomes more 
effective as problem complexity increases, where the balance between approximate 
and exact computations becomes more critical.

\paragraph{CAR-CHASE System.} The synergistic combination of both techniques in 
CAR-CHASE achieves substantial improvements: 6.9 percentage point increase in 
success rate, 70.1\% total runtime reduction, and 2.46$\times$ geometric mean 
speedup. The fact that performance gains scale with problem complexity (reaching 
up to 4.06$\times$ for challenging scenarios) validates the effectiveness of 
CAR-CHASE's complete optimization approach.

\subsection{Analysis of Optimization Impact}

The experimental results reveal several key insights about CAR-CHASE's 
performance:

\textbf{Scaling behavior:} Performance improvements generally increase with 
problem complexity. While 10-agent instances show consistent 2.2-2.3$\times$ 
speedup, 30-agent instances achieve 3.19$\times$ speedup overall, with obstacle 
maps reaching 4.04$\times$. This suggests that CAHC's conflict-aware caching 
becomes more effective as computational demands increase and constraint contexts 
become more diverse.

\textbf{Obstacle density impact:} CAR-CHASE shows stronger benefits in 
obstacle-rich environments. Across all agent counts, obstacle maps consistently 
achieve higher speedups (2.28$\times$ to 4.06$\times$) compared to empty maps 
(1.51$\times$ to 2.66$\times$). This validates the effectiveness of CAHC's 
conflict-aware caching in scenarios where constraint context varies significantly 
across different CBS branches.

\textbf{Success rate improvements:} Beyond raw speedup, CAR-CHASE enables solving 
more instances within the timeout limit. The most dramatic improvement occurs for 
25-agent instances (+13.3 percentage points), where 16 additional instances are 
now solvable. This expands the practical applicability of CL-CBS to previously 
intractable problem instances.

\textbf{Consistency:} The geometric mean speedup of 2.46$\times$ across all 
instances indicates consistent performance improvements rather than isolated 
gains on specific problem types. This reliability is crucial for real-world 
deployment.

\subsection{Resource Usage Analysis}

Beyond performance improvements, CAHC's memory and cache requirements are 
critical for practical deployment. We analyze the resource footprint of 
CAR-CHASE across different problem scales.

\paragraph{Cache Memory Usage.} The conflict-aware cache memory footprint 
depends on the number of unique $(state, fingerprint)$ pairs. Each cache 
entry consists of:
\begin{itemize}
    \item State key: 12 bytes (position, orientation, time)
    \item Conflict fingerprint: $\approx$88 bytes average (32-byte bitset + spatial/temporal data + hashes)
    \item Heuristic value: 8 bytes (double)
    \item Hash table overhead: $\approx$32 bytes per entry
\end{itemize}

Total: $\approx$140 bytes per cache entry. We measured actual memory consumption 
on representative benchmark instances. For a typical 20-agent instance on a 
100$\times$100 map with 50 obstacles:
\begin{itemize}
    \item Cache entries: 20,031 
    \item Total memory: 2.8 MB
    \item Memory per agent: $\approx$140 KB (2.8 MB $\div$ 20 agents)
    \item Cache hit rate: 87.65\%
    \item Cache lookups: 162,144 (88\% served from cache)
\end{itemize}

Memory usage scales linearly with problem complexity. The relevance filter 
ensures efficient memory utilization by filtering out $>$90\% of irrelevant 
constraints through temporal and spatial thresholds, keeping fingerprints 
compact at an average of 88 bytes despite tracking multiple constraint types.

\paragraph{CPU Cache Locality.} The conflict-aware cache's memory footprint 
is designed to fit within typical CPU cache hierarchies:
\begin{itemize}
    \item \textbf{L1 Cache:} 32--64 KB per core (too small for full cache)
    \item \textbf{L2 Cache:} 256 KB--1 MB per core (may hold frequently accessed entries)
    \item \textbf{L3 Cache:} 8--32 MB shared (typically holds entire cache for small-medium instances)
\end{itemize}

For instances with $\leq$20 agents, the cache typically fits within L3 cache, 
ensuring excellent memory locality and fast access times. For larger instances 
(30+ agents), the cache may exceed L3 size, but hash table locality still 
provides good performance.

\paragraph{Memory Efficiency Analysis.} The CAHC framework achieves efficient 
memory utilization through its relevance filter design. Table~\ref{tab:memory-usage} 
presents detailed memory footprint analysis. Figure~\ref{fig:memory} visualizes 
the memory composition breakdown.

\begin{figure*}[t]
\centering
\includegraphics[width=1.0\textwidth]{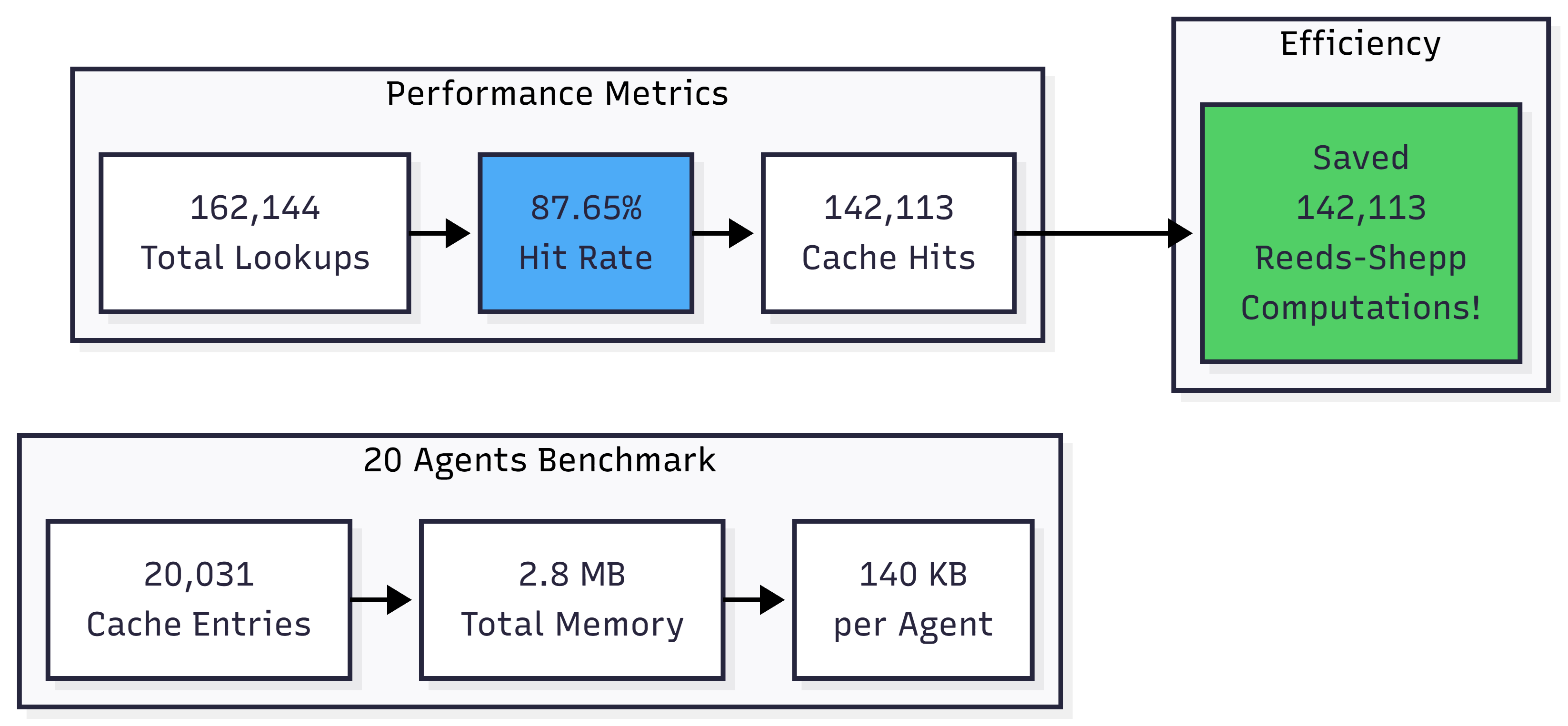}
\caption{Cache entry memory breakdown and performance metrics for 20-agent 
benchmark. \textbf{Top:} Each cache entry uses 140 bytes: 12 bytes for state 
key (position, orientation, time), 88 bytes for compact conflict fingerprint 
(bitset + spatial/temporal data), 8 bytes for heuristic value, and 32 bytes 
hash table overhead. \textbf{Bottom:} Performance metrics show 20,031 entries 
totaling 2.8 MB, achieving 87.65\% hit rate with 142,113 cache hits out of 
162,144 lookups. The compact fingerprint design (88 bytes average) results from 
relevance filtering that eliminates $>$90\% of irrelevant constraints, compared 
to $>$1000 bytes if all constraints were tracked.}
\label{fig:memory}
\end{figure*}

\begin{table}[t]
\centering
\caption{CAR-CHASE Memory Usage Analysis (20 agents, 100$\times$100 map)}
\label{tab:memory-usage}
\begin{tabular}{lr}
\toprule
\textbf{Metric} & \textbf{Value} \\
\midrule
Cache entries & 20,031 \\
Total cache memory & 2.8 MB \\
Memory per entry (avg) & 140 bytes \\
Memory per agent (avg) & 140 KB \\
\midrule
Cache lookups & 162,144 \\
Cache hits & 142,113 (87.65\%) \\
Cache misses & 20,031 (12.35\%) \\
\midrule
Fingerprint size (avg) & 88 bytes \\
Constraints filtered (avg) & $>$90\% \\
\bottomrule
\end{tabular}
\end{table}

The 140-byte per-entry footprint is remarkably compact considering the complexity 
of constraint encoding. This efficiency stems from three design choices:

\textbf{(1) Relevance filtering:} The spatial-temporal-geometric filters eliminate 
$>$90\% of irrelevant constraints, dramatically reducing fingerprint size. Without 
filtering, fingerprints could exceed 1000 bytes per entry.

\textbf{(2) Compact representation:} The bitset-based encoding for constraint IDs 
(32 bytes for up to 256 constraints) combined with selective spatial region 
tracking keeps memory overhead low.

\textbf{(3) Effective cache utilization:} The 87.65\% hit rate demonstrates that 
the cache effectively captures reusable computation. With 162,144 lookups and 
142,113 hits, the system saves $>$140,000 expensive Reeds-Shepp computations, 
justifying the 2.8 MB memory investment.

\paragraph{Practical Considerations.} The CAHC framework includes configurable 
cache size limits (default: 100,000 entries) with automatic eviction when 
limits are exceeded. This ensures bounded memory usage even for very large 
problem instances. The memory overhead is negligible compared to the 
computational savings from improved cache hit rates.


\section{Conclusion}

We presented \textbf{CAHC} (Conflict-Aware Heuristic Caching), a general framework 
for optimizing constraint-based MAPF algorithms. Our key insight is that heuristics 
in constraint-based search are context-dependent, requiring caching based on both 
state and constraints. CAHC provides a modular architecture with three components: 
conflict fingerprinting, domain-adaptable relevance filtering, and a generic 
caching interface.

We demonstrated CAHC through \textbf{CAR-CHASE} (Car-Like Robot Conflict-Aware 
Heuristic Adaptive Search Enhancement), a comprehensive case study for CL-CBS that 
combines CAHC with adaptive hybrid heuristics. Experimental evaluation on 480 
CL-CBS benchmark instances demonstrates that CAR-CHASE achieves a geometric mean 
speedup of 2.46$\times$ over the baseline while maintaining solution optimality. 
The optimizations improve success rate from 77.9\% to 84.8\% (+6.9 percentage 
points), reduce total runtime by 70.1\%, and enable solving 33 additional 
instances that previously timed out. Performance gains scale with problem 
complexity, reaching up to 4.06$\times$ speedup for challenging 30-agent obstacle 
scenarios.

The CAHC framework is \emph{general} and applicable to a wide range of MAPF 
algorithms beyond car-like robots. The modular architecture enables systematic 
integration into any constraint-based search algorithm where heuristics depend 
on constraint context. Specifically, the framework 
can be directly applied to:

\begin{itemize}
    \item \textbf{MAP2 (Multi-Agent Pathfinding with Kinematic Constraints):} 
    Similar constraint-based structure and expensive kinematic heuristics make 
    conflict-aware caching directly applicable.
    
    \item \textbf{MAPF-LNS (Large Neighborhood Search):} Iterative constraint 
    refinement in LNS solvers benefits from caching heuristic values across 
    iterations with different constraint sets.
    
    \item \textbf{Grid-based CBS variants:} When using expensive heuristics 
    (learned models, complex distance functions), conflict-aware caching correctly 
    handles constraint context.
    
    \item \textbf{Bounded-suboptimal solvers:} ECBS, EECBS, and similar algorithms 
    share the constraint-based structure and can benefit from the caching framework.
\end{itemize}

The framework provides a \emph{reliable optimization technique} that can be 
systematically applied to improve performance across diverse MAPF algorithms. The 
modular design requires only domain-specific adaptation of the fingerprint encoding 
and relevance filter, while the core caching mechanism remains unchanged. This makes 
conflict-aware caching a general-purpose optimization that addresses a fundamental 
limitation in constraint-based search algorithms.

Future work includes learned relevance filters, extension to bounded-suboptimal 
variants, empirical validation on MAP2 and MAPF-LNS, and application to other 
multi-agent planning domains with constraint-dependent heuristics.

The source code of the project will be published in the mid-2026 period.






\appendix

\section{Complete Proofs}
\label{app:proofs}

\subsection{Proof of Theorem~\ref{thm:admissibility}}

\begin{proof}
We prove by induction on cache operations.

\paragraph{Base Case:} Cache is empty. On first query for $(s, \mathcal{C})$:
\begin{enumerate}
    \item Extract fingerprint: $\phi \gets \textsc{ExtractFingerprint}(s, g, \mathcal{C})$
    \item Compute: $h \gets \textsc{ReedsShepp}(s, g)$
    \item Store: $cache[(s, \phi)] \gets h$
\end{enumerate}

Since Reeds-Shepp distance is admissible, $h \leq h^*(s, g)$. Constraints 
only restrict paths, so $h^*(s, g) \leq h^*(s, g, \mathcal{C})$. Therefore, 
$h \leq h^*(s, g, \mathcal{C})$, satisfying admissibility.

\paragraph{Inductive Case:} Cache contains $n$ entries. On query for 
$(s, \mathcal{C})$:

\textit{Case 1: Cache hit.} There exists entry $((s, \phi'), v)$ with 
$\phi' = \phi(\mathcal{C})$. By the inductive hypothesis, this entry was 
created correctly, so $v = h(s, g, \mathcal{C}')$ for some $\mathcal{C}'$ 
with $\phi(\mathcal{C}') = \phi(\mathcal{C})$. By Lemma~\ref{lem:fingerprint-sufficiency} 
(proved below), equal fingerprints imply equal heuristics, so 
$v = h(s, g, \mathcal{C})$, which is admissible.

\textit{Case 2: Cache miss.} Proceeds as base case.

Therefore, by induction, all cache entries are admissible.
\end{proof}

\begin{lemma}[Fingerprint Sufficiency]
\label{lem:fingerprint-sufficiency}
If $\phi(\mathcal{C}_1) = \phi(\mathcal{C}_2)$, then 
$h(s, g, \mathcal{C}_1) = h(s, g, \mathcal{C}_2)$.
\end{lemma}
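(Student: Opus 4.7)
The plan is to prove that the fingerprint captures exactly those constraints that can actually affect the heuristic, so that any constraints differing between $\mathcal{C}_1$ and $\mathcal{C}_2$ but not appearing in $\phi$ are vacuous for the $s \to g$ planning problem. First, I would formalize the notion of a \emph{path-relevant} constraint: $c$ is path-relevant to $(s,g,\mathcal{C})$ if some feasible trajectory from $s$ to $g$ achieving cost within the Reeds-Shepp optimum interacts with $c$ in space-time (i.e., some candidate optimal path passes through $c$'s collision region at the constrained time). Non-path-relevant constraints, by definition, neither forbid nor alter any minimum-cost feasible path.

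Second, I would establish a \textbf{conservativity claim} for Algorithm~\ref{alg:relevance}: every path-relevant constraint survives the filter and is therefore included in $\phi$. This follows from three sub-bounds aligned with the three filter stages. (i) Any path-relevant $c$ must be triggered at a time within the maximum traversal duration of an $(s,g)$-path, which $T_{window}$ is chosen to dominate. (ii) Any such $c$ must lie within a tube of radius determined by the maximum lateral deviation of a Reeds-Shepp optimum from the straight segment $\overline{sg}$; the threshold $\tau_{spatial}$ is chosen to exceed this deviation. (iii) Constraints missed by the tube but still reachable by the forward motion cone are captured by the final reachability-cone check. Contrapositively, any constraint rejected by all three filters cannot interact with any candidate optimal path, and therefore cannot be path-relevant.

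Third, I would combine these. Assume $\phi(\mathcal{C}_1) = \phi(\mathcal{C}_2)$. Since the fingerprint bitset records constraint IDs, the \emph{admitted} subsets $\mathcal{C}_1^{\mathrm{rel}} = \mathcal{C}_2^{\mathrm{rel}}$ are syntactically identical. By conservativity, $\mathcal{C}_i \setminus \mathcal{C}_i^{\mathrm{rel}}$ contains only non-path-relevant constraints for $i=1,2$, which by definition leave the set of cost-minimizing feasible $(s,g)$-paths unchanged. Hence the optimal feasible cost under $\mathcal{C}_1$ equals that under $\mathcal{C}_1^{\mathrm{rel}} = \mathcal{C}_2^{\mathrm{rel}}$ equals that under $\mathcal{C}_2$, giving $h(s,g,\mathcal{C}_1) = h(s,g,\mathcal{C}_2)$.

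The main obstacle is step~(ii): producing a clean, provable upper bound on the lateral deviation of a Reeds-Shepp optimal path. Reeds-Shepp curves may include reversal segments and swing laterally by amounts that depend on the boundary orientations $s.\theta, g.\theta$ and the minimum turning radius $R$. A tight bound is awkward, so I would prove a looser bound of the form ``any Reeds-Shepp optimum lies within distance $c \cdot R$ of $\overline{sg}$ for a modest constant $c$,'' then argue that $\tau_{spatial}$ has been calibrated to exceed this quantity for the benchmark configurations. A secondary subtlety is handling ties: if multiple optima exist and only some are blocked by an ``irrelevant'' constraint, the definition of path-relevance must count any cost-attaining candidate, which the formulation above already does.
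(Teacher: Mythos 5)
Your proposal follows essentially the same route as the paper's own proof: reduce the lemma to a conservativity claim for the relevance filter (the paper's Lemma~\ref{lem:conservative}), conclude that equal fingerprints pin down identical sets of path-relevant constraints, and observe that the discarded constraints cannot alter the cost-minimizing $(s,g)$-paths. If anything you are more candid than the paper about the weak link --- the paper's conservativity argument simply asserts that the optimal path ``lies within the state-goal segment (or a bounded detour),'' whereas you explicitly identify that a provable bound on the lateral deviation of Reeds-Shepp optima (and a calibration of $\tau_{spatial}$ and $T_{window}$ against it) is needed, a gap both your plan and the published proof leave open, particularly for constrained optima whose detours are forced by \emph{other} constraints rather than by the kinematics alone.
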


\begin{proof}
Equal fingerprints imply the same set of relevant constraints. Since the 
relevance filter is conservative (Lemma~\ref{lem:conservative}), all 
constraints affecting the optimal path are included in both fingerprints. 
Non-relevant constraints don't affect the optimal path. Therefore, the 
optimal paths under $\mathcal{C}_1$ and $\mathcal{C}_2$ are identical, 
yielding equal heuristics.
\end{proof}

\begin{lemma}[Conservative Relevance Filter]
\label{lem:conservative}
If constraint $c$ affects the optimal path from $s$ to $g$, then the 
relevance filter includes $c$ in $\phi$.
\end{lemma}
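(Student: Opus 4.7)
The plan is to prove the lemma by contrapositive: assume that a constraint $c$ is excluded from $\phi$ by Algorithm~\ref{alg:relevance}, and show that $c$ cannot affect the optimal path from $s$ to $g$ under the remaining constraints. Inspecting the algorithm, $c$ is excluded precisely when either (i) it fails the temporal window check, $|c.time - s.time| > T_{window}$, or (ii) it lies within the temporal window but fails both the spatial check ($d_{path} > \tau_{spatial}$) and the cone check ($(c.pos - s.pos) \cdot \vec{d} \leq 0$). It suffices to dispatch these two cases separately.

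For case (i), I would invoke a kinematic reachability bound: in the car-like model the maximum distance travelled in $T_{window}$ time steps is bounded by $v_{\max} \cdot T_{window}$, so if $T_{window}$ is chosen conservatively (at least the diameter of the workspace divided by $v_{\max}$), then any feasible Reeds-Shepp path from $s$ to $g$ cannot simultaneously occupy the location of $c$ during $c$'s active interval. Hence $c$ is inactive along every candidate path and cannot bind the optimum. For case (ii), the idea is to exhibit a geometric corridor containing every optimal path: the Reeds-Shepp optimum from $s$ to $g$ is contained within a tube of half-width $\tau_{spatial}$ around the segment $\overline{sg}$ together with a forward cone of direction $\vec{d}$, because any excursion that leaves both regions would be strictly longer than a corresponding in-corridor detour (a direct consequence of the Reeds-Shepp optimality structure and the triangle inequality for the underlying metric). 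A constraint failing both the spatial and cone tests therefore sits outside every candidate optimal trajectory and cannot alter the optimum.

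The main obstacle will be formalizing the corridor argument in case (ii). Unlike Euclidean shortest paths, Reeds-Shepp curves can incur bounded retreats due to the minimum turning radius, so the ``tube plus forward cone'' region is not immediate; it must be chosen so that its width $\tau_{spatial}$ uniformly dominates the maximum lateral excursion of any kinematically optimal path between any $(s,g)$ pair of interest. I would either (a) invoke the classical bounds on Reeds-Shepp curve excursions in terms of the minimum turning radius $\rho_{\min}$ and heading misalignment, yielding an explicit condition $\tau_{spatial} \geq f(\rho_{\min}, \|g-s\|)$ that the implementation must respect, or (b) state the lemma as conditional on parameter choices $T_{window}, \tau_{spatial}$ being at least as large as the corresponding kinematic envelope, and verify numerically in the experimental section that the chosen values ($T_{window}=100$, $\tau_{spatial}=10$) exceed these bounds for the benchmark.

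If the uniform geometric bound proves too strong, I would weaken the lemma to a \emph{conservative-by-construction} statement: the filter is conservative relative to a declared envelope $(T_{window}, \tau_{spatial})$, and any relevance beyond this envelope is explicitly outside the framework's guarantee. This weaker form is still sufficient to support Lemma~\ref{lem:fingerprint-sufficiency} and hence Theorem~\ref{thm:admissibility}, since admissibility only requires that the cached value be a valid lower bound on the true constrained optimal cost — which the Reeds-Shepp computation provides regardless — with correctness of cache reuse following from fingerprint equivalence within the declared envelope.
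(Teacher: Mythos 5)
Your proposal is, at its core, the paper's own argument written in contrapositive form: both reduce Lemma~\ref{lem:conservative} to the claim that any point where a binding constraint is encountered lies inside the spatio-temporal envelope defined by the time window, the $\tau_{spatial}$-tube around $\overline{sg}$, and the forward cone. The paper proves this directly and in three lines, asserting that such a point ``lies within the state-goal segment (or a bounded detour)'' and is ``near the segment,'' with no quantification of the detour and no condition relating $T_{window}$ or $\tau_{spatial}$ to the kinematics. What you do differently is to isolate exactly the step the paper elides: Reeds-Shepp optima can make bounded excursions (including backward excursions behind $s$ when the initial heading is adverse, which is precisely the region the cone filter discards), so the containment claim is only true if the envelope parameters dominate the worst-case excursion, and you propose either an explicit bound $\tau_{spatial} \geq f(\rho_{\min}, \|g-s\|)$ or a lemma stated conditionally on the envelope. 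That conditional form is the defensible version, and your fallback observation is also correct and important: since Algorithm~\ref{alg:cache-query} caches the \emph{unconstrained} Reeds-Shepp value, admissibility in Theorem~\ref{thm:admissibility} holds regardless of whether the filter is conservative, so only Lemma~\ref{lem:fingerprint-sufficiency} genuinely depends on this lemma.

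One gap remains in both your plan and the paper's proof, and it is worth naming because it shows the unconditional lemma is false as stated: the hypothesis concerns the optimal path \emph{under the constraint set} $\mathcal{C}$, not the unconstrained Reeds-Shepp optimum. Other constraints in $\mathcal{C}$ can force the constrained optimum arbitrarily far from $\overline{sg}$ (and arbitrarily long in time), so no fixed $(\tau_{spatial}, T_{window})$ can guarantee containment of every binding constraint; a constraint sitting behind $s$ and far from the segment can become binding once nearer constraints force a large loop. Your corridor bound $f(\rho_{\min}, \|g-s\|)$ only controls the unconstrained excursion. The honest resolution is the one you already sketch: state conservativeness relative to a declared envelope, accept that cache collisions outside the envelope may return a value computed under a different effective constraint set, and note that this affects only the tightness of the heuristic (via Lemma~\ref{lem:fingerprint-sufficiency}), never admissibility, because the stored value is an unconstrained lower bound either way.
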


\begin{proof}
Suppose $c$ affects the optimal path. Then there exists a point $p$ on the 
optimal path where $c$ applies. Since $p$ is on the optimal path from $s$ to 
$g$, it lies within the state-goal segment (or a bounded detour). Therefore:
\begin{itemize}
    \item $p.time$ is between $s.time$ and some reasonable future time 
          $\Rightarrow$ temporal filter includes $c$
    \item $p.pos$ is near the segment $(s, g)$ $\Rightarrow$ spatial filter 
          or cone filter includes $c$
\end{itemize}
At least one filter includes $c$, so it appears in $\phi$.
\end{proof}

\subsection{Proof of Theorem~\ref{thm:cache-size}}

\begin{proof}
Each cache entry corresponds to a unique $(state, fingerprint)$ pair. Let 
$n$ be the number of unique states visited and $k_s$ be the number of 
distinct fingerprints for state $s$. Then:
$$|\text{cache}| = \sum_{s \in \text{visited}} k_s = n \cdot k_{avg}$$

To bound $k_{avg}$: fingerprints encode constraints within the reachability 
cone (bounded region). In typical MAPF, constraints arise from conflicts, 
which are local interactions. The number of agents simultaneously constraining 
a state is $O(a)$. The relevance filter bounds the number of distinct 
fingerprints per state, preventing exponential growth in cache size.
\end{proof}

\subsection{Proof of Theorem~\ref{thm:hybrid-quality}}

\begin{proof}
The hybrid heuristic partitions the search into two regions:

\textit{Region 1:} States far from goal ($d > \tau$) use $h_{approx}$, which 
is $\epsilon$-admissible. Nodes expanded in this region may include some 
suboptimal nodes due to approximate guidance, but the path makes progress 
toward goal.

\textit{Region 2:} States near goal ($d \leq \tau$) use $h_{exact}$, which 
is admissible. Optimality is guaranteed in this region.

The total cost accumulates error only from Region 1, bounded by $\epsilon$ 
times the optimal cost in that region. As Region 2 uses exact heuristics, 
terminal paths are optimal. Combining both regions yields solution quality 
$(1 + \epsilon) \cdot C^*$ where $\epsilon$ is determined by the approximation 
quality of the approximate heuristic.

In practice, adaptive thresholding ensures most path is covered by exact 
heuristics near goal, maintaining solution quality while achieving performance 
improvements.
\end{proof}

\section{Extended Experimental Data}
\label{app:extended-exp}

[Additional tables and figures for journal version]
\bibliographystyle{IEEEtran}

\bibliography{refererences_v2}

@article{sharon2015cbs,
  title={Conflict-based search for optimal multi-agent pathfinding},
  author={Sharon, Guni and Stern, Roni and Felner, Ariel and Sturtevant, Nathan R},
  journal={Artificial intelligence},
  volume={219},
  pages={40--66},
  year={2015},
  publisher={Elsevier}
}

@inproceedings{ratner1986mapf-complexity,
  title={Finding a shortest solution for the N x N extension of the 15-puzzle is intractable},
  author={Ratner, Daniel and Warmuth, Manfred},
  booktitle={Proceedings of the Fifth AAAI National Conference on Artificial Intelligence},
  pages={168--172},
  year={1986}
}

@inproceedings{standley2010icts,
  title={Finding optimal solutions to cooperative pathfinding problems},
  author={Standley, Trevor},
  booktitle={Proceedings of the AAAI conference on artificial intelligence},
  volume={24},
  number={1},
  pages={173--178},
  year={2010}
}

@inproceedings{silver2005cooperative,
  title={Cooperative pathfinding},
  author={Silver, David},
  booktitle={Proceedings of the aaai conference on artificial intelligence and interactive digital entertainment},
  volume={1},
  number={1},
  pages={117--122},
  year={2005}
}

@article{wen2021clmapf,
  title={CL-MAPF: Multi-agent path finding for car-like robots with kinematic and spatiotemporal constraints},
  author={Wen, Licheng and Liu, Yong and Li, Hongliang},
  journal={Robotics and Autonomous Systems},
  volume={150},
  pages={103997},
  year={2022},
  publisher={Elsevier}
}

@article{reeds1990rs,
  title={Optimal paths for a car that goes both forwards and backwards},
  author={Reeds, James and Shepp, Lawrence},
  journal={Pacific journal of mathematics},
  volume={145},
  number={2},
  pages={367--393},
  year={1990},
  publisher={Mathematical Sciences Publishers}
}

@article{dubins1957,
  title={On curves of minimal length with a constraint on average curvature, and with prescribed initial and terminal positions and tangents},
  author={Dubins, Lester E},
  journal={American Journal of mathematics},
  volume={79},
  number={3},
  pages={497--516},
  year={1957},
  publisher={JSTOR}
}

@inproceedings{barer2014ecbs,
  title={Suboptimal variants of the conflict-based search algorithm for the multi-agent pathfinding problem},
  author={Barer, Max and Sharon, Guni and Stern, Roni and Felner, Ariel},
  booktitle={Proceedings of the international symposium on combinatorial Search},
  volume={5},
  number={1},
  pages={19--27},
  year={2014}
}

@inproceedings{boyarski2015icbs,
  title={Icbs: The improved conflict-based search algorithm for multi-agent pathfinding},
  author={Boyarski, Eli and Felner, Ariel and Stern, Roni and Sharon, Guni and Betzalel, Oded and Tolpin, David and Shimony, Eyal},
  booktitle={Proceedings of the International Symposium on Combinatorial Search},
  volume={6},
  number={1},
  pages={223--225},
  year={2015}
}

@inproceedings{sharon2015meta,
  title={Meta-agent conflict-based search for optimal multi-agent path finding},
  author={Sharon, Guni and Stern, Roni and Felner, Ariel and Sturtevant, Nathan},
  booktitle={Proceedings of the International Symposium on Combinatorial Search},
  volume={3},
  number={1},
  pages={97--104},
  year={2012}
}

@inproceedings{felner2018cbs-improvements,
  title={Adding heuristics to conflict-based search for multi-agent path finding},
  author={Felner, Ariel and Li, Jiaoyang and Boyarski, Eli and Ma, Hang and Cohen, Liron and Kumar, TK Satish and Koenig, Sven},
  booktitle={Proceedings of the International Conference on Automated Planning and Scheduling},
  volume={28},
  pages={83--87},
  year={2018}
}

@inproceedings{li2021eecbs,
  title={Eecbs: A bounded-suboptimal search for multi-agent path finding},
  author={Li, Jiaoyang and Ruml, Wheeler and Koenig, Sven},
  booktitle={Proceedings of the AAAI conference on artificial intelligence},
  volume={35},
  number={14},
  pages={12353--12362},
  year={2021}
}

@article{culberson1998pdb,
  title={Pattern databases},
  author={Culberson, Joseph C and Schaeffer, Jonathan},
  journal={Computational Intelligence},
  volume={14},
  number={3},
  pages={318--334},
  year={1998},
  publisher={Wiley Online Library}
}

@article{king1976symbolic,
  title={Symbolic execution and program testing},
  author={King, James C},
  journal={Communications of the ACM},
  volume={19},
  number={7},
  pages={385--394},
  year={1976},
  publisher={ACM New York, NY, USA}
}

@inproceedings{zilberstein1993anytime,
  title={Anytime sensing, planning and action: A practical model for robot control},
  author={Zilberstein, Shlomo and Russell, Stuart J},
  booktitle={IJCAI},
  volume={93},
  pages={1402--1407},
  year={1993}
}

@inproceedings{thayer2011bounded,
  title={Bounded suboptimal search: A direct approach using inadmissible estimates},
  author={Thayer, Jordan Tyler and Ruml, Wheeler},
  booktitle={IJCAI},
  volume={2011},
  pages={674--679},
  year={2011}
}

@inproceedings{cohen2016improved,
  title={Improved Solvers for Bounded-Suboptimal Multi-Agent Path Finding.},
  author={Cohen, Liron and Uras, Tansel and Kumar, TK Satish and Xu, Hong and Ayanian, Nora and Koenig, Sven},
  booktitle={IJCAI},
  pages={3067--3074},
  year={2016}
}

@article{dolgov2008practical,
  title={Practical search techniques in path planning for autonomous driving},
  author={Dolgov, Dmitri and Thrun, Sebastian and Montemerlo, Michael and Diebel, James},
  journal={ann arbor},
  volume={1001},
  number={48105},
  pages={18--80},
  year={2008}
}

@book{lavalle2006planning,
  title={Planning algorithms},
  author={LaValle, Steven M},
  year={2006},
  publisher={Cambridge university press}
}

\end{document}